\documentclass{article}



\usepackage[preprint]{neurips_2020}



\usepackage[utf8]{inputenc} 
\usepackage[T1]{fontenc}    
\usepackage{hyperref}       
\usepackage{url}            
\usepackage{booktabs}       
\usepackage{amsfonts}       
\usepackage{nicefrac}       
\usepackage{microtype}      
\usepackage{hyperref}
\usepackage{algorithm}
\usepackage{algpseudocode}
\usepackage{graphicx}
\usepackage{amssymb,amsthm,amsfonts,amscd,mathrsfs}
\usepackage{thm-restate}
\usepackage{amsmath}
\usepackage{cleveref}

\newtheorem{thm}{Theorem}
\newtheorem{lma}[thm]{Lemma}

\title{Training End-to-End Analog Neural Networks\\
with Equilibrium Propagation}

%

\author{%
  Jack Kendall$^1$, Ross Pantone$^1$, Kalpana Manickavasagam$^1$,\\
  \textbf{Yoshua Bengio$^{2,3}$, Benjamin Scellier$^{2,}\thanks{Currently at Google.}$}\\
  $^1$Rain Neuromorphics\\
  $^2$Mila, Université de Montréal\\
  $^3$Canadian Institute for Advanced Research 
}

\begin{document}

\maketitle

\begin{abstract}
We introduce a principled method to train end-to-end analog neural networks by stochastic gradient descent. In these analog neural networks, the weights to be adjusted are implemented by the conductances of programmable resistive devices such as memristors \citep{chua1971memristor}, and the nonlinear transfer functions (or `activation functions') are implemented by nonlinear components such as diodes. We show mathematically that a class of analog neural networks (called nonlinear resistive networks) are energy-based models: they possess an energy function as a consequence of Kirchhoff's laws governing electrical circuits. This property enables us to train them using the Equilibrium Propagation framework \citep{Scellier+Bengio-frontiers2017}. Our update rule for each conductance, which is local and relies solely on the voltage drop across the corresponding resistor, is shown to compute the gradient of the loss function. Our numerical simulations, which use the SPICE-based \textit{Spectre} simulation framework to simulate the dynamics of electrical circuits, demonstrate training on the MNIST classification task, performing comparably or better than equivalent-size software-based neural networks. Our work can guide the development of a new generation of ultra-fast, compact and low-power neural networks supporting on-chip learning.
\end{abstract}

\section{Introduction}

In recent years, deep neural networks have proved extremely effective in machine learning, achieving state-of-the-art performance in a variety of domains, including image classification \citep{he2016deep}, speech recognition \citep{hinton2012deep}, machine translation \citep{vaswani2017attention}, and text-to-speech \citep{oord2016wavenet}. One of the core principles to train these deep neural networks is optimization by stochastic gradient descent (SGD).

However, training these neural networks on graphics processing units (GPUs) is time consuming and energy intensive.
This is due to the separation of memory and processing in von Neumann hardware, which leads to a severe bottleneck in moving the data back and forth between memory and compute units -- the so-called \textit{von Neumann bottleneck}. Building fast and energy-efficient neural networks requires a non-von Neumann computing paradigm which unifies memory and processing, by performing neural computations at the physical location of the synapses, where the strength of the connections (the weights of the neural network) are stored and adjusted.

Programmable resistors can implement the synapses of a neural network by encoding the synaptic weights in their conductances. Such programmable resistors can be built into large crossbar arrays to represent the weight matrices of the layer-to-layer transformations of a deep neural network. This setup presents significant advantages over multi-processors such as GPUs: the number of operations that can be simultaneously executed in a GPU is limited by its number of processors ; in contrast, a crossbar array is a massively-parallel computing device in which all resistors do parallel computations. Furthermore, since the computation in a crossbar array is in the analog domain, the power consumption is also several orders of magnitude lower than that of a GPU \citep{burr2017neuromorphic}.

Crossbar arrays have been proposed to accelerate the hardware implementation of the backpropagation algorithm, which is the key algorithm of deep learning to train conventional neural networks \citep{li2018efficient,rekhi2019ams}. Nevertheless, these implementations require digital-to-analog (DAC) and analog-to-digital (ADC) conversion between the layers of the network, the latter being known to be responsible for most of the power consumption \citep{li2015merging}. This suggests an opportunity to further cut power consumption by avoiding DACs and ADCs.

In this work, we suggest an alternative to the backpropagation algorithm, which eliminates the need for DACs and ADCs. We introduce end-to-end analog neural networks, in which nonlinear resistive components such as diodes play the role of nonlinear transfer functions. Crucially, our hardware implementation of neural networks allows end-to-end analog computing not just for inference, but also for training. To achieve this, we use the Equilibrium Propagation (or EqProp) framework, suitable for optimization by SGD with local weight updates \citep{Scellier+Bengio-frontiers2017}.

EqProp applies to energy-based models (EBMs), i.e. neural network models that possess an \textit{energy function}. A key result we show is that a class of analog neural networks called \textit{nonlinear resistive networks} are EBMs: they possess an energy function whose existence is a direct consequence of Kirchhoff's laws governing electrical circuits.
As a consequence, these analog networks are trainable by SGD using locally available information for each weight. Specifically, we show mathematically that the gradient (of the loss to minimize) with respect to a conductance can be estimated using solely the voltage drop across the corresponding resistor. This result opens up a path towards dramatically faster and orders of magnitude higher energy-efficient neural networks trainable by SGD.

The main contributions of the present work are the following.
\begin{itemize}
\item Inspired by the work of \citet{johnson2010nonlinear}, we show that a class of analog neural networks called \textit{nonlinear resistive networks} are energy-based models (EBMs): at inference, the configuration of node voltages chosen by the circuit corresponds to the minimum of a mathematical function (the \textit{energy function}) called the \textit{total pseudo-power} of the circuit, as a consequence of Kirchhoff's laws (Lemma \ref{lma:power} in Appendix \ref{sec:proof}). By bridging the conceptual gap between energy-based models (at a mathematical level\footnote{In an EBM, the \textit{energy function} is a mathematical abstraction of the model, not a physical energy.}), and physical energies\footnote{Specifically the power dissipated in resistive devices.} (at a hardware level), our work thus introduces an implementation of energy-based neural networks grounded in device physics. 
\item We show how these analog neural networks can be trained with the Equilibrium Propagation framework \citep{Scellier+Bengio-frontiers2017}, and we prove a formula for updating the conductances (the synaptic weights) in proportion to their error gradients, using solely the voltage drops across the corresponding resistive devices (Theorem \ref{thm:gradients} in Section \ref{sec:eqprop}).
This result provides theoretical ground for implementing end-to-end analog neural networks trainable by stochastic gradient descent using a fast and low-power weight-update mechanism. 
\item We introduce a deep analog network architecture inspired by those of conventional deep learning (Fig.\ref{fig:network} and Section \ref{sec:model}).
\item We demonstrate the potential of our novel neuromorphic hardware methodology with numerical simulations on the MNIST dataset, using a SPICE-based framework to simulate the circuit's dynamics. Due to computational constraints, we train a network with 100 hidden neurons for 10 epochs and obtain $3.43\%$ test error rate, outperforming equivalent-size software-based neural networks (Section \ref{sec:numerical-simulations}).
\end{itemize}

By explicitly decoupling the training algorithm (EqProp in Section \ref{sec:eqprop}) from the specific neural network architecture studied in this work (Section \ref{sec:model}), we stress that our optimization method can be used for any network architecture, not just the one of Section \ref{sec:model}. Our modular approach thus offers the possibility to explore the design space of analog network architectures trainable with EqProp, in essentially the same way as deep learning researchers explore the design space of differentiable neural networks trainable with backpropagation.

\begin{figure*}[ht!]
\begin{center}
\includegraphics[width=\textwidth]{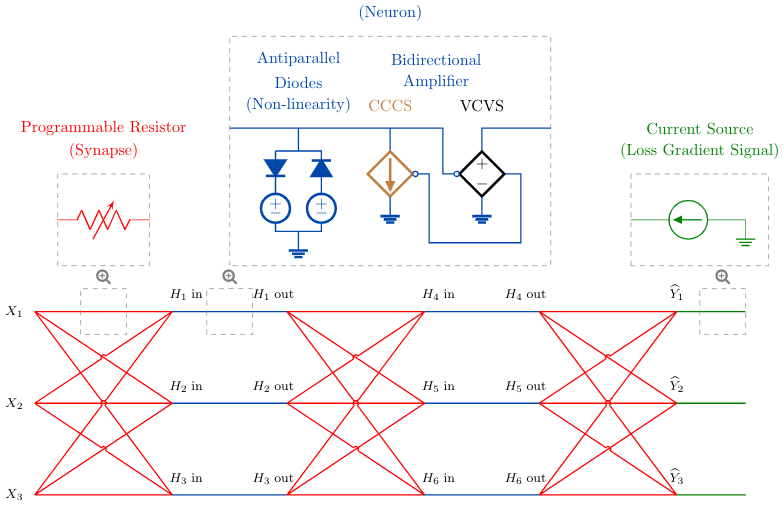}
\end{center}
\caption{
\textbf{Deep analog neural network} with three input nodes ($X_1$, $X_2$ and $X_3$), two layers of three hidden neurons each ($H_1$, $H_2$, $H_3$, and $H_4$, $H_5$, $H_6$) and three output nodes ($\widehat{Y}_1$, $\widehat{Y}_2$ and $\widehat{Y}_3$). Blue branches and red branches represent neurons and synapses, respectively. Each synapse is a programmable resistor, whose conductance represents a parameter to be adjusted (section \ref{sec:synapses}). Each neuron is formed of a nonlinear transfer function and a bidirectional amplifier. The nonlinear transfer function is implemented by a pair of antiparallel diodes (in series with voltage sources), which forms a sigmoidal function in its voltage response (section \ref{sec:nonlinear-transfer-function}). The bidirectional amplifier consists of a current-controlled current source (CCCS, shown in brown) and a voltage-controlled voltage source (VCVS, shown in black), allowing signals to propagate in both directions without a decay in amplitude (section \ref{sec:amplifiers}). Output nodes are linked to current sources (shown in green) which serve to inject loss gradient signals during training (section \ref{sec:switches}). \textbf{Equilibrium Propagation} is a two-phase procedure to compute the gradient of a loss $\mathcal{L} = \ell(\widehat{Y},Y)$, where $Y$ is the desired target (section \ref{sec:conductance-gradient}). In the first phase (\textit{free phase}, or inference), input voltages are sourced at input nodes and the current sources are set to zero; in the second phase (\textit{nudged phase}), for each output node $\widehat{Y}_k$ the corresponding current source is set to $I_k = - \beta \frac{\partial \ell}{\partial \widehat{Y}_k}$, where $\beta$ is a scaling factor (a hyperparameter). The update rule to adjust the conductances of programmable resistors is local (Theorem \ref{thm:gradients}.)
}
\label{fig:network}
\end{figure*}

\section{Related Work}

\paragraph{Accelerators for deep learning.}
The computations involved in conventional neural networks consist in large parts of tensor multiplications. A key insight is that the multiply and accumulate operations of tensor multiplications can be performed in the analog domain by utilizing Ohm's law and Kirchhoff's current law, respectively. In particular, crossbar arrays of analog resistive memory elements can implement matrix-vector multiplications in their voltage-current transfer function. A growing field of research exploits this idea to develop specialized hardware aimed at speeding up the forward and backward passes of the backpropagation algorithm \citep{burr2017neuromorphic,jerry2017ferroelectric,ambrogio2018equivalent,li2018efficient,xia2019memristive}. However, in this approach, a fundamental drawback arises from utilizing a different circuit for the forward and backward passes. The device mismatches and nonidealities inevitably present in analog hardware cause a layer-specific error in gradient computation. As the gradients propagate backwards through the network, these errors accumulate. As a result, the performance of the network is severely degraded. 
This effect can be mitigated by using digital-to-analog conversion (DAC) and analog-to-digital conversion (ADC) between the layers of the network to compute the forward activation function and the backward pointwise derivatives of the activation function in the digital domain (`exactly'), but this technique comes at the cost of greatly increasing power consumption \citep{li2015merging}. In contrast, our method (EqProp) uses the same circuit for both phases of training (free phase and nudged phase), thereby greatly simplifying the resulting hardware architecture. Crucially, since our update rule is local (Theorem \ref{thm:gradients}), the nonidealities of devices do not prevent reliable gradient computation. Finally, our method does not require DACs and ADCs.

\paragraph{Energy-based models (EBMs).}
EBMs have played an important role in the foundations of deep learning -- see \citet{lecun2006tutorial} for a comprehensive tutorial on energy-based learning.
Historically, the Hopfield model (first introduced in the discrete time setting with binary neurons \citep{hopfield1982neural} and then adapted to the real-time setting with real-valued neurons \citep{cohen1983absolute,hopfield1984neurons}) and the Boltzmann machine \citep{ackley1985learning} (a stochastic variant of the former) were the first EBMs to be introduced. Unfortunately, these models suffer from long and slow inference phases, making them mostly impractical and out of stage in modern deep learning applications. Boltzmann machines require running a Monte Carlo Markov Chain (MCMC), while the Hopfield model requires an equally long phase of energy minimization.
By highlighting the fact that nonlinear resistive networks possess an energy function (the so-called \textit{total pseudo-power}), our work suggests an ultra efficient implementation of energy-based neural networks, in which minimization of the energy function is performed by the physics of the circuit (Kirchhoff's laws), rather than by a lengthy MCMC simulation.

\paragraph{Equilibrium Propagation (EqProp).}
EqProp is a general algorithm for computing error gradients in EBMs, inspired by the contrastive learning algorithm for Boltzmann machines \citep{ackley1985learning} and Hopfield networks \citep{movellan1991contrastive}. Previous works have mostly studied EqProp in the setting of classification tasks to train the Hopfield model and variants of it \citep{Scellier+Bengio-frontiers2017,scellier2019equivalence,scellier2018generalization,khan2018bidirectional,o2018initialized,o2019training,ernoult2019updates,ernoult2020equilibrium,zoppo2020equilibrium}. However, EqProp is a much more general method. Recently, \citet{ernoult2019updates} used EqProp to train energy-based convolutional networks. In this work, we apply EqProp to a new class of energy-based neural networks called \textit{nonlinear resistive networks} and we show in Appendix \ref{sec:general-applicability} how EqProp can be used in the setting of Generative Adversarial Networks \citep{goodfellow2014generative}.

\section{End-to-End Training via Equilibrium Propagation}
\label{sec:eqprop}

We consider here for simplicity of presentation the supervised setting in which one has an input $X$ and one wants to predict its associated target $Y$, e.g. the setting of image classification where $X$ is an image and $Y$ a label. Note however that our framework extends beyond this setting: we show in Appendix \ref{sec:general-applicability} how it can be used in the setting of generative adversarial learning.

\subsection{Nonlinear Resistive Networks and Supervised Learning}
\label{sec:electrical-networks}

\paragraph{Nonlinear resistive network.} The neural networks studied here are called \textit{nonlinear resistive networks}. These are electrical circuits consisting of two-terminal elements with continuous current-voltage characteristics. This includes programmable resistors (whose conductances play the role of synaptic weights), diodes (which form non-linear transfer functions), voltage sources (used to set data samples at input nodes) and current sources (used to inject loss gradient signals during training).

\paragraph{Performing inference.} A subset of the nodes of the circuit are \textit{input nodes} at which input voltages (denoted $X$) are sourced. All other nodes -- the \textit{internal nodes} and \textit{output nodes} -- are left floating: after the voltages of input nodes have been set, the voltages of internal and output nodes settle to their steady state. The output nodes, denoted $\widehat{Y}$, represent the readout of the system, i.e. the model prediction.

\paragraph{Loss function to minimize.}
The architecture and the components of the circuit determine the $X \mapsto \widehat{Y}$ function. Specifically, the conductances of the programmable resistors, denoted $\theta$, parameterize this function. That is, $\widehat{Y}$ can be written as a function of $X$ and $\theta$ in the form $\widehat{Y}(X,\theta)$. Training such a circuit consists in adjusting the values of the conductances ($\theta$) so that the voltages of output nodes ($\widehat{Y}$) approach the target voltages ($Y$). Formally, we cast the goal of training as an optimization problem in which the loss to be optimized (corresponding to an input-target pair $(X,Y)$) is of the form:
\begin{equation}
    \label{eq:loss}
    \mathcal{L}(X,Y,\theta) = \ell \left( \widehat{Y}(X,\theta),Y \right).
\end{equation}
In this work we use the squared error loss (Eq.~\ref{eq:loss3}). However, our framework applies to any differentiable function $\ell$ ; see Appendix \ref{sec:loss} for common examples.

\subsection{Computing the Gradient with respect to the Conductance of a Resistor}
\label{sec:conductance-gradient}

Equilibrium Propagation (EqProp) is a training framework for energy-based models (EBMs) \citep{Scellier+Bengio-frontiers2017}. A key point we show is that nonlinear resistive networks are EBMs\footnote{Writing the form of the `energy function' (the \textit{total pseudo-power} of the circuit) requires introducing a substantial amount of notation. For this reason, we state and prove this result in Appendix \ref{sec:proof}.} (Lemma \ref{lma:power} in Appendix \ref{sec:general-theorem}). This allows us to use EqProp in such analog neural networks to compute the gradient of the loss of Eq.~\ref{eq:loss}. Theorem \ref{thm:gradients} below provides a formula for computing the loss gradient with respect to a conductance using solely the voltage drop across the corresponding resistor.

Given an input $X$ and associated target $Y$, EqProp proceeds in the following two phases.

\paragraph{Free phase (first phase).}
At inference, input voltages are sourced at input nodes ($X$), while all other nodes of the circuit (the internal nodes and output nodes) are left floating. All internal and output node voltages are measured. In particular, the voltages of output nodes ($\widehat{Y}$) corresponding to prediction are compared with the target ($Y$) to compute the loss $\mathcal {L} = \ell ( \widehat{Y},Y)$.

\paragraph{Nudged phase (second phase).}
For each output node $\widehat{Y}_k$, a current $I_k = - \beta \frac{\partial \ell}{\partial \widehat{Y}_k}$ is sourced at $\widehat{Y}_k$, where $\beta$ is a positive or negative scaling factor (a hyperparameter). All internal node voltages and output node voltages are measured anew. \\

\begin{restatable}[Gradient Formula]{thm}{thmgradients}
\label{thm:gradients}
Consider a nonlinear resistive network, and let $g_{ij}$ denote the conductance of a linear resistor whose terminals are $i$ and $j$ (i.e. a resistor across which the current $I_{ij}$ and voltage drop $\Delta V_{ij}$ satisfy Ohm's law: $I_{ij} = g_{ij} \Delta V_{ij}$). Denote $\Delta V_{ij}^0$ the voltage drop across this resistor in the free phase (when no current is sourced at output nodes), and $\Delta V_{ij}^\beta$ the voltage drop in the nudged phase (when a current $I_k = - \beta \frac{\partial \ell}{\partial \widehat{Y}_k}$ is sourced at each output node $\widehat{Y}_k$). Then, the gradient of the loss ${\mathcal L}  = \ell \left( \widehat{Y},Y \right)$ with respect to $g_{ij}$ can be estimated, in the limit $\beta \to 0$, as
\begin{equation}
\label{eq:thm-conductance}
\frac{\partial {\mathcal L}}{\partial g_{ij}} = \lim_{\beta \to 0} \frac{1}{2 \beta} \left( \left( \Delta V^\beta_{ij} \right)^2 -  \left( \Delta V^0_{ij} \right)^2 \right).
\end{equation}
\end{restatable}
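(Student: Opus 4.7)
The plan is to reduce Theorem \ref{thm:gradients} to the standard Equilibrium Propagation identity, using the total pseudo-power of Lemma \ref{lma:power} as the energy function $E(v,\theta)$, where $v$ collects the internal and output node voltages and $\theta$ collects the programmable conductances. Lemma \ref{lma:power} asserts that the free-phase voltages $v^0(\theta)$ minimise $E(\cdot,\theta)$ subject to the boundary condition fixing $X$, so in particular $\partial_v E(v^0,\theta) = 0$.

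The first step is to recast the nudged phase as a stationarity condition for an augmented energy. I would show that injecting a current $I_k$ at output node $\widehat{Y}_k$ modifies Kirchhoff's current law there, so that the equilibrium condition becomes $\partial E/\partial \widehat{Y}_k = I_k$ instead of $\partial E/\partial \widehat{Y}_k = 0$. Substituting the prescribed $I_k = -\beta\,\partial \ell/\partial \widehat{Y}_k$ then shows that the nudged equilibrium $v^\beta$ is a stationary point of
\[
F_\beta(v,\theta) \defeq E(v,\theta) + \beta\,\ell\bigl(\widehat{Y}(v),Y\bigr).
\]

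Next I would derive the generic EqProp identity. Implicitly differentiating $\partial_v E(v^0,\theta)=0$ with respect to $\theta$ and $\partial_v F_\beta(v^\beta,\theta)=0$ with respect to $\beta$ at $\beta=0$, and using symmetry of the Hessian $\partial_{vv}E$, one obtains
\[
\frac{\partial \mathcal{L}}{\partial \theta}(\theta) = \frac{d}{d\beta}\bigg|_{\beta=0} \frac{\partial E}{\partial \theta}\bigl(v^\beta,\theta\bigr).
\]
Specialising to $\theta = g_{ij}$ is then immediate: among all branches, the conductance $g_{ij}$ enters $E$ only through the co-content $\tfrac{1}{2} g_{ij}(\Delta V_{ij})^2$ of the single linear resistor it labels, so $\partial E/\partial g_{ij} = \tfrac{1}{2}(\Delta V_{ij})^2$. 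Rewriting the derivative in $\beta$ as a first-order finite difference yields exactly (\ref{eq:thm-conductance}).

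The main obstacle is the sign bookkeeping in the first step: I must verify that the modification of Kirchhoff's current law by an external current source has exactly the sign making $F_\beta = E + \beta\,\ell$ (and not $E - \beta\,\ell$) the correct augmented energy, consistent with the convention $I_k = -\beta\,\partial \ell/\partial \widehat{Y}_k$ of the theorem statement, and this depends on the precise form of $E$ given in Lemma \ref{lma:power}. Once that is pinned down, the rest is the generic EqProp derivation of \citep{Scellier+Bengio-frontiers2017}; differentiability of $\beta \mapsto v^\beta$ at $\beta = 0$ needed to pass to the limit follows from the implicit function theorem applied at the (assumed unique and non-degenerate) free-phase minimum.
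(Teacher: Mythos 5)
Your proposal is correct and follows essentially the same route as the paper: the total pseudo-power of Lemma \ref{lma:power} serves as the energy function (note the lemma asserts only that the steady state is a \emph{critical point}, not a minimum, but stationarity is all your argument uses), the nudged steady state is recognised as a stationary point of $\mathscr{P} = \mathcal{P} + \beta \ell$ with exactly the sign bookkeeping you flag (which the paper settles via the convention that $\gamma_k^\beta(V) = \beta \, \partial \ell / \partial V_k$ is the current flowing from node $k$ outwards), and the specialisation $\partial \mathcal{P} / \partial g_{ij} = \frac{1}{2} \left( \Delta V_{ij} \right)^2$ matches the paper's reduction of Theorem \ref{thm:gradients} to Theorem \ref{lma:gradients}. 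The only cosmetic difference is that you derive the EqProp identity by implicitly differentiating the two stationarity conditions and invoking symmetry of the Hessian $\partial_{vv} E$, whereas the paper (Lemma \ref{lma:aux}) packages the same computation as equality of mixed second derivatives of the scalar function $H(\theta,\beta) = \mathscr{P}\left(\theta,\beta,V_\theta^\beta\right)$; the two derivations are equivalent, and both leave the regularity of $\beta \mapsto v^\beta$ to an implicit-function-theorem assumption, which you in fact state more explicitly than the paper does.
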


Theorem \ref{thm:gradients} is proved in Appendix \ref{sec:proof} (the proof comes with a detailed sketch of the proof first). It is a particular case of the more general formula of Theorem \ref{lma:gradients} (Appendix \ref{sec:parameter-gradient}) which shows how to compute the gradient in the case of an arbitrary non-linear resistive device.

In Appendix \ref{sec:loss}, we give a few examples of common loss functions and derive the corresponding currents ($I_k$) to be sourced at output nodes in the nudged phase.

Interestingly, in a chain-like layered neural network such as the one of Figure \ref{fig:network}, the second phase of EqProp (nudged phase) is similar in spirit to the backward pass of the backpropagation algorithm: the currents introduced at output nodes in the nudged phase can be thought of as error signals propagating backwards in the layers of the neural network, from output nodes back to input nodes.

\section{Deep Analog Neural Network Architecture}
\label{sec:model}

The theory of Section \ref{sec:eqprop} applies to any nonlinear resistive network. In this section, we introduce a neural network architecture inspired by those of conventional deep learning (Figure~\ref{fig:network}). It is composed of multiple layers, alternating linear and non-linear processing stages. The linear transformations are performed by crossbar arrays of programmable resistors, whose conductances play the role of synaptic weights that parameterize the transformations. The nonlinear transfer function is implemented using a pair of diodes, followed by a linear amplifier. These crossbar arrays of programmable resistors and these nonlinear transfer functions are alternated to form a deep network.

\subsection{Programmable Resistors as Synapses}
\label{sec:synapses}

In the last decade, important advances in nanotechnology have provided neuromorphic researchers with a panoply of new devices
which allow for ultra low-power synaptic plasticity. Programmable resistors that have been proposed and tested as synapses include memristors \citep{jo2010nanoscale}, resistive random-access memory \citep{huang2019hardware}, phase-change memory \citep{ambrogio2016unsupervised}, ferroelectric field-effect transistors \citep{jerry2017ferroelectric
}, flash memory \citep{guo2017fast}, magnetic random-access memory \citep{patil2019mram}, conductive-bridging random access memory \citep{cha2020conductive} and spin-transfer-torque memory \citep{vincent2015spin}, among others. We refer to \citet{burr2017neuromorphic} 
for a review on the use of programmable resistors for neuromorphic computing.

These programmable resistors behave as pure resistors in a low-voltage regime. However, by applying voltage or current pulses, depending on the device, their conductances can be modified. The programming of a conductance requires only a small amount of energy, since such devices can be extremely small ($\sim 2 \; \textrm{nm}$) and their switching speed can be extremely fast ($\sim 1\;\textrm{ns}$).

Programmable resistors are especially suited for a local learning rule such as the one of Theorem \ref{thm:gradients}. In the free phase and nudged phase, the node voltages can be measured (without disturbing the circuit) and stored by using a sample-and-hold amplifier (SHA)
. Then, by embedding the resistors in a suitable synaptic circuit, we can program them to update their conductances proportionally to their loss gradients, i.e. $\Delta g_{ij} \propto - \frac{1}{\beta} \left[ \left( \Delta V^\beta_{ij} \right)^2 -  \left( \Delta V^0_{ij} \right)^2 \right]$, thus performing one step of stochastic gradient descent (SGD). This can be achieved by means of amplitude/duration modulation of a voltage or current pulse applied to the resistive device, for example the 1T–1R (one transistor–one memristor) synapse \citep{merced2016repeatable}. Furthermore, in a crossbar array, all conductances can be updated simultaneously with two vectors of voltage pulses, a method called the \textit{outer product update} \citep{fuller2019parallel}.

Although the physical realization of memristor synapses presents challenges (such as device-to-device variability and systematic bias in the weight updates), its investigation is out of the scope of this work. We refer to \citet{chang2017mitigating} for a thorough analysis.

\subsection{Neurons as Nonlinear Transfer Functions}
\label{sec:nonlinear-transfer-function}

One way to think of a neuron is that it acts as a transfer function between an input current 
and an output voltage (Fig.~\ref{fig:network}). The transfer function (which plays the role of `activation function') is set up so that the neuron's output voltage is a smooth and S-shaped (sigmoidal) function of the neuron's input current. 
To achieve this, we place two diodes antiparallel between the neuron's input node and ground, in series with voltage sources (used to shift the bounds of the activation function). As a result, the neuron’s output voltage is linear in the range between zero and one volt, but as the voltage rises above or below these thresholds, one of the diodes turns on and sinks most of the extra current to ground. The output voltage remains bounded even as the input current 
grows very large. The transfer function equation of a diode is provided in Appendix \ref{sec:model-details}.

\subsection{Bidirectional Amplifiers}
\label{sec:amplifiers}

In a network consisting only of resistors and diodes, simulations show that the voltages of hidden neurons span a significantly smaller range of voltage values than the voltages of input nodes (the voltages of hidden neurons are closer to zero). To prevent signal decay, we use voltage-controlled voltage sources (VCVS) which amplify the voltages of hidden neurons in the forward direction by a gain factor $A$. To better propagate error signals in the second phase of training (nudged phase), we also use current-controlled current sources (CCCS) which amplify currents in the backward direction by a gain factor $1/A$. We call such a combination of a forward-directed VCVS and a backward-directed CCCS a `bidirectional amplifier'. More details are provided in Appendix \ref{sec:amplifiers-details}.

\subsection{Constraint of Positive Weights -- Doubling Input and Output Nodes}
\label{sec:pos-weights}

One constraint of analog neural networks (compared to conventional neural networks) is that the conductances of programmable resistors, which represent the weights, are positive. Several approaches are proposed in the literature to overcome this constraint (Appendix \ref{sec:nonnegative-weights}). In this work, our approach consists in doubling the number of input nodes and inverting one set, and doubling the number of output nodes.
Typically in a classification task with $K$ classes, the target vector $Y = (Y_1,Y_2, \ldots, Y_K)$ represents the one-hot code of the class label.
For each class $k$, our network thus has two output nodes $\widehat{Y}_k^+$ and $\widehat{Y}_k^-$, with $\widehat{Y}_k^+ - \widehat{Y}_k^-$ representing a score assigned to class $k$. The loss to optimize is the squared error loss:
\begin{equation}
\label{eq:loss3}
\ell(\widehat{Y},Y) = \frac{1}{2} \sum_{k=1}^K \left( \widehat{Y}_k^+-\widehat{Y}_k^--Y_k \right)^2.
\end{equation}

\subsection{Injecting Loss Gradient Signals with Current Sources}
\label{sec:switches}

In the nudged phase (second phase), we require currents $I_k^+$ and $I_k^-$ proportional to the gradients of output node voltages $\widehat{Y}_k^+$ and $\widehat{Y}_k^-$, which must be injected at output nodes. These currents are:
\begin{equation}
    I_k^+ = - \beta \frac{\partial \ell}{\partial \widehat{Y}_k^+} = \beta \left( Y_k+\widehat{Y}_k^--\widehat{Y}_k^+ \right), \qquad I_k^- = - \beta \frac{\partial \ell}{\partial \widehat{Y}_k^-} = \beta \left( \widehat{Y}_k^+-\widehat{Y}_k^- - Y_k \right).
\end{equation}
We achieve this using current sources. In the free phase (first phase), the same current sources are set to zero current, acting like open circuits and not influencing the voltages of output nodes.

\section{Numerical Simulations}
\label{sec:numerical-simulations}

We use the SPICE (simulation program with integrated circuit emphasis) framework for realistic simulations of the circuit’s dynamics \citep{vogt2020ngspice}. See Appendix \ref{sec:simulation-details} for simulation details.

\paragraph{XOR task.}
As a proof of concept, we show that our analog neural network can indeed learn a nonlinear function such as $Y = X_1 \; \textrm{XOR} \; X_2$. The architecture together with the final weights after training are reported in Figure \ref{fig:xor} (Appendix \ref{sec:simulation-details}).

\paragraph{MNIST digits classification task.}
We perform our simulations on the MNIST dataset using the high-performance SPICE-class parallel circuit simulator \textit{Spectre} \citep{2020spectre}. Despite the use of this high-performance simulator, the computational difficulties described below constrain us to limit our simulations to training a small (by deep learning standards) network with a single hidden layer of 100 neurons. Since training takes $18$ hours per epoch, we stop training after 10 epochs (for a total training duration of one week).

After 10 epochs of training, our SPICE-based network achieves a test error rate of $3.43\%$ (Table \ref{mnist_results}), while the training curve (Figure~\ref{fig:training-curves} in Appendix \ref{sec:simulation-details}) suggests that training isn't complete. For comparison, we train a logistic regression model which achieves $7.27\%$ test error (Appendix \ref{sec:simulation-details}), thus demonstrating that our SPICE-based network benefits from the non-linearities offered by the diodes. We then benchmark our SPICE-based network against a PyTorch implementation of the original EqProp model \citep{Scellier+Bengio-frontiers2017}. We train two kinds of PyTorch-based networks: one whose weights are free to be either positive or negative, and one that also possesses the same strictly positive weight constraints as our SPICE-based network (see Section \ref{sec:pos-weights}). In both cases, our SPICE-based network outperforms the PyTorch-based networks with 100 hidden neurons (Table \ref{mnist_results}). We also show that increasing the number of hidden neurons in the PyTorch-based networks results in lower, more competitive test error rates (2.01\%). These results suggest that with more computational power, training a larger SPICE-based network to convergence would result in a lower final test error rate. Altogether our results demonstrate the potential of our novel neuromorphic approach.

\paragraph{Circuit simulators and machine learning.}
We note that SPICE (and similar circuit simulators as well) is not ideal for the types of simulations required in machine learning, due to the sequential nature of training.
Traditionally, one designs a circuit and uses a simulator to verify the correctness of it via a tractable number of test cases. Importantly, these simulators are not designed for performing many simulations iteratively.
Perhaps interestingly, we found that simulators often possessed a large overhead that surpassed the actual simulation time, 
which further points out that these simulators are not designed for running millions of iterations.

\begin{table}
    \caption{Results on MNIST. SPICE EqProp (our model) is benchmarked against a PyTorch implementation of the original EqProp model \citep{Scellier+Bengio-frontiers2017}. For the PyTorch models, the mean values and 95\% confidence intervals of test errors are reported over five runs, after 10 epochs of training and after training completion. 100 and 500 denote the number of neurons in the hidden layer. `pos. weights' means that weights are constrained to be positive (as explained in section \ref{sec:pos-weights}). }
\centering
\begin{tabular}{@{}lccccc@{}} \toprule
{}&\multicolumn{2}{c}{Error rates after $10$ epochs ($\%$)} &\multicolumn{2}{c}{Final error rates ($\%$)} \\
\cmidrule(r){2-5}
{}& Test & (Train) & Test & (Train) \\
\midrule
\textbf{SPICE EqProp 100 (our model)} & $\mathbf{3.43}$ & $(\mathbf{2.68})$ & & \\ 
\midrule
PyTorch EqProp 100 (pos. weights) & $3.85 \pm 0.05$  & $(2.87)$ & $3.44 \pm 0.04$ & $(1.38)$ \\
PyTorch EqProp 100                & $3.99 \pm 0.19$  & $(2.93)$ & $3.59 \pm 0.06$ & $(1.35)$ \\
\midrule
PyTorch EqProp 500 (pos. weights) & $2.49 \pm 0.01$  & $(1.47)$ & $\mathbf{2.01 \pm 0.02}$ & $(\mathbf{0.26})$ \\
PyTorch EqProp 500                & $2.92 \pm 0.09$  & $(1.82)$ & $2.38 \pm 0.11$ & $(0.52)$ \\
\bottomrule
\end{tabular}
\label{mnist_results}
\end{table}

\section{Discussion}

Even though the idea of neuromorphic hardware has existed for long \citep{mead1989analog}, in the last decades neural networks have more rapidly evolved to fit the constraints of conventional von Neumann computers. As a consequence, today's neural networks (together with the backpropagation algorithm which was invented to train them) are fundamentally discrete-time dynamical systems. With the recent success of deep learning, a growing field of research is emerging which uses mixed signal analog/digital hardware to accelerate these discrete-time neural networks. However, when moving from the digital world to the analog world, much more improvement in speed and power efficiency is possible by rethinking neural networks as well as their training algorithm. Alternative neural network paradigms which make use of the full potential of analog hardware for ultra-low energy computing have been proposed and studied (e.g. spiking neural networks) but their development has thus far been hindered due to the lack of a theoretical framework to train them.

Our work uses the formalism of energy-based models and equilibrium propagation to provide such a theoretical framework for end-to-end analog neural networks trainable by stochastic gradient descent with a local weight update mechanism. The modularity of our framework, in which the training algorithm (EqProp) can be decoupled from the neural network architecture, offers the possibility for neuromorphic researchers to explore the design space of analog network architectures, with the perspective of finding circuits and configurations of components that best fit with EqProp training.

Among the remaining challenges, the main one is perhaps to figure out how to leverage the working mechanisms of memristors to implement reliable conductance changes. While experimental implementation of memristive crossbar arrays is still in its infancy, numerous simulations have indicated their potential. In particular, \citet{ambrogio2018equivalent} have demonstrated in the context of feedforward networks trained with backpropagation that memristive crossbar arrays can achieve comparable accuracy to software-based networks running on conventional computer hardware, while increasing speed and reducing power consumption by two orders of magnitude.
The potential speedup and power reduction offered by analog computing
are especially critical for scaling up neural networks to billions of neurons (sizes that are far out of reach with current GPU-based deep learning models). To achieve such sizes, it will also become critical to take into account geometric constraints such as sparsity in neural connectivity. For this purpose, sparse analog hardware architectures such as Memristive Nanowire Neural Networks (MN3) are a promising alternative to dense crossbar arrays \citep{kendall2020deep}.

Finally, our framework, which is a theory of how circuit dynamics can optimize objective functions, may also inspire neuroscientists who seek to explain the mechanisms of credit assignment in the brain \citep{whittington2019theories,richards2019deep,lillicrap2020backpropagation}.

\section*{Acknowledgments}

The authors would like to thank Thomas Fischbacher, Maxence Ernoult, Michal Januszewski and Effrosyni Kokiopoulou for valuable feedback and discussions. The authors would also like to thank John O'Donovan for his invaluable assistance with the Spectre simulator.

\bibliographystyle{abbrvnat}
\bibliography{biblio}

\begin{thebibliography}{53}
\providecommand{\natexlab}[1]{#1}
\providecommand{\url}[1]{\texttt{#1}}
\expandafter\ifx\csname urlstyle\endcsname\relax
  \providecommand{\doi}[1]{doi: #1}\else
  \providecommand{\doi}{doi: \begingroup \urlstyle{rm}\Url}\fi

\bibitem[Ackley et~al.(1985)Ackley, Hinton, and Sejnowski]{ackley1985learning}
D.~H. Ackley, G.~E. Hinton, and T.~J. Sejnowski.
\newblock A learning algorithm for boltzmann machines.
\newblock \emph{Cognitive science}, 9\penalty0 (1):\penalty0 147--169, 1985.

\bibitem[Ambrogio et~al.(2016)Ambrogio, Ciocchini, Laudato, Milo, Pirovano,
  Fantini, and Ielmini]{ambrogio2016unsupervised}
S.~Ambrogio, N.~Ciocchini, M.~Laudato, V.~Milo, A.~Pirovano, P.~Fantini, and
  D.~Ielmini.
\newblock Unsupervised learning by spike timing dependent plasticity in phase
  change memory (pcm) synapses.
\newblock \emph{Frontiers in neuroscience}, 10:\penalty0 56, 2016.

\bibitem[Ambrogio et~al.(2018)Ambrogio, Narayanan, Tsai, Shelby, Boybat,
  di~Nolfo, Sidler, Giordano, Bodini, Farinha, et~al.]{ambrogio2018equivalent}
S.~Ambrogio, P.~Narayanan, H.~Tsai, R.~M. Shelby, I.~Boybat, C.~di~Nolfo,
  S.~Sidler, M.~Giordano, M.~Bodini, N.~C. Farinha, et~al.
\newblock Equivalent-accuracy accelerated neural-network training using
  analogue memory.
\newblock \emph{Nature}, 558\penalty0 (7708):\penalty0 60--67, 2018.

\bibitem[Baez and Fong(2015)]{baez2015compositional}
J.~C. Baez and B.~Fong.
\newblock A compositional framework for passive linear networks.
\newblock \emph{arXiv preprint arXiv:1504.05625}, 2015.

\bibitem[Burr et~al.(2017)Burr, Shelby, Sebastian, Kim, Kim, Sidler, Virwani,
  Ishii, Narayanan, Fumarola, et~al.]{burr2017neuromorphic}
G.~W. Burr, R.~M. Shelby, A.~Sebastian, S.~Kim, S.~Kim, S.~Sidler, K.~Virwani,
  M.~Ishii, P.~Narayanan, A.~Fumarola, et~al.
\newblock Neuromorphic computing using non-volatile memory.
\newblock \emph{Advances in Physics: X}, 2\penalty0 (1):\penalty0 89--124,
  2017.

\bibitem[{Cadence Design Systems, Inc.}(2020)]{2020spectre}
{Cadence Design Systems, Inc.}
\newblock Spectre circuit simulator reference, version 19.1, Jan 2020.

\bibitem[Cha et~al.(2020)Cha, Yang, Oh, Choi, Park, Jang, Ahn, and
  Choi]{cha2020conductive}
J.-H. Cha, S.~Y. Yang, J.~Oh, S.~Choi, S.~Park, B.~C. Jang, W.~B. Ahn, and
  S.-Y. Choi.
\newblock Conductive-bridging random-access memories for emerging neuromorphic
  computing.
\newblock \emph{Nanoscale}, 2020.

\bibitem[Chang et~al.(2017)Chang, Chen, Chou, Wang, Hudec, Chang, Tsai, Chang,
  and Hou]{chang2017mitigating}
C.-C. Chang, P.-C. Chen, T.~Chou, I.-T. Wang, B.~Hudec, C.-C. Chang, C.-M.
  Tsai, T.-S. Chang, and T.-H. Hou.
\newblock Mitigating asymmetric nonlinear weight update effects in hardware
  neural network based on analog resistive synapse.
\newblock \emph{IEEE Journal on Emerging and Selected Topics in Circuits and
  Systems}, 8\penalty0 (1):\penalty0 116--124, 2017.

\bibitem[Christianson and Erickson(2007)]{christianson2007dirichlet}
K.~Christianson and L.~Erickson.
\newblock The dirichlet problem on directed networks.
\newblock 2007.
\newblock URL
  \url{https://sites.math.washington.edu/~reu/papers/2007/KariLindsay/dirichlet.pdf}.
\newblock Accessed: 2020-05-31.

\bibitem[Chua(1971)]{chua1971memristor}
L.~Chua.
\newblock Memristor-the missing circuit element.
\newblock \emph{IEEE Transactions on circuit theory}, 18\penalty0 (5):\penalty0
  507--519, 1971.

\bibitem[Cohen and Grossberg(1983)]{cohen1983absolute}
M.~A. Cohen and S.~Grossberg.
\newblock Absolute stability of global pattern formation and parallel memory
  storage by competitive neural networks.
\newblock \emph{IEEE transactions on systems, man, and cybernetics}, \penalty0
  (5):\penalty0 815--826, 1983.

\bibitem[Ernoult et~al.(2019)Ernoult, Grollier, Querlioz, Bengio, and
  Scellier]{ernoult2019updates}
M.~Ernoult, J.~Grollier, D.~Querlioz, Y.~Bengio, and B.~Scellier.
\newblock Updates of equilibrium prop match gradients of backprop through time
  in an rnn with static input.
\newblock In \emph{Advances in Neural Information Processing Systems}, pages
  7079--7089, 2019.

\bibitem[Ernoult et~al.(2020)Ernoult, Grollier, Querlioz, Bengio, and
  Scellier]{ernoult2020equilibrium}
M.~Ernoult, J.~Grollier, D.~Querlioz, Y.~Bengio, and B.~Scellier.
\newblock Equilibrium propagation with continual weight updates.
\newblock \emph{arXiv preprint arXiv:2005.04168}, 2020.

\bibitem[Fuller et~al.(2019)Fuller, Keene, Melianas, Wang, Agarwal, Li,
  Tuchman, James, Marinella, Yang, et~al.]{fuller2019parallel}
E.~J. Fuller, S.~T. Keene, A.~Melianas, Z.~Wang, S.~Agarwal, Y.~Li, Y.~Tuchman,
  C.~D. James, M.~J. Marinella, J.~J. Yang, et~al.
\newblock Parallel programming of an ionic floating-gate memory array for
  scalable neuromorphic computing.
\newblock \emph{Science}, 364\penalty0 (6440):\penalty0 570--574, 2019.

\bibitem[Glorot and Bengio(2010)]{glorot2010understanding}
X.~Glorot and Y.~Bengio.
\newblock Understanding the difficulty of training deep feedforward neural
  networks.
\newblock In \emph{Proceedings of the Thirteenth International Conference on
  Artificial Intelligence and Statistics}, pages 249--256, 2010.

\bibitem[Goodfellow et~al.(2014)Goodfellow, Pouget-Abadie, Mirza, Xu,
  Warde-Farley, Ozair, Courville, and Bengio]{goodfellow2014generative}
I.~Goodfellow, J.~Pouget-Abadie, M.~Mirza, B.~Xu, D.~Warde-Farley, S.~Ozair,
  A.~Courville, and Y.~Bengio.
\newblock Generative adversarial nets.
\newblock In \emph{Advances in neural information processing systems}, pages
  2672--2680, 2014.

\bibitem[Guo et~al.(2017)Guo, Bayat, Bavandpour, Klachko, Mahmoodi, Prezioso,
  Likharev, and Strukov]{guo2017fast}
X.~Guo, F.~M. Bayat, M.~Bavandpour, M.~Klachko, M.~Mahmoodi, M.~Prezioso,
  K.~Likharev, and D.~Strukov.
\newblock Fast, energy-efficient, robust, and reproducible mixed-signal
  neuromorphic classifier based on embedded nor flash memory technology.
\newblock In \emph{2017 IEEE International Electron Devices Meeting (IEDM)},
  pages 6--5. IEEE, 2017.

\bibitem[He et~al.(2016)He, Zhang, Ren, and Sun]{he2016deep}
K.~He, X.~Zhang, S.~Ren, and J.~Sun.
\newblock Deep residual learning for image recognition.
\newblock In \emph{Proceedings of the IEEE conference on computer vision and
  pattern recognition}, pages 770--778, 2016.

\bibitem[Hinton et~al.(2012)Hinton, Deng, Yu, Dahl, Mohamed, Jaitly, Senior,
  Vanhoucke, Nguyen, Sainath, et~al.]{hinton2012deep}
G.~Hinton, L.~Deng, D.~Yu, G.~E. Dahl, A.-r. Mohamed, N.~Jaitly, A.~Senior,
  V.~Vanhoucke, P.~Nguyen, T.~N. Sainath, et~al.
\newblock Deep neural networks for acoustic modeling in speech recognition: The
  shared views of four research groups.
\newblock \emph{IEEE Signal processing magazine}, 29\penalty0 (6):\penalty0
  82--97, 2012.

\bibitem[Hopfield(1982)]{hopfield1982neural}
J.~J. Hopfield.
\newblock Neural networks and physical systems with emergent collective
  computational abilities.
\newblock \emph{Proceedings of the national academy of sciences}, 79\penalty0
  (8):\penalty0 2554--2558, 1982.

\bibitem[Hopfield(1984)]{hopfield1984neurons}
J.~J. Hopfield.
\newblock Neurons with graded response have collective computational properties
  like those of two-state neurons.
\newblock \emph{Proceedings of the national academy of sciences}, 81\penalty0
  (10):\penalty0 3088--3092, 1984.

\bibitem[Hoskins et~al.(2019)Hoskins, Daniels, Huang, Madhavan, Adam, Zhitenev,
  McClelland, and Stiles]{hoskins2019sbe}
B.~D. Hoskins, M.~W. Daniels, S.~Huang, A.~Madhavan, G.~C. Adam, N.~Zhitenev,
  J.~J. McClelland, and M.~D. Stiles.
\newblock Streaming batch eigenupdates for hardware neural networks.
\newblock \emph{Frontiers in Neuroscience}, 13, 2019.

\bibitem[Hu et~al.(2016)Hu, Strachan, Li, Grafals, Davila, Graves, Lam, Ge,
  Yang, and Williams]{hu2016dot}
M.~Hu, J.~P. Strachan, Z.~Li, E.~M. Grafals, N.~Davila, C.~Graves, S.~Lam,
  N.~Ge, J.~J. Yang, and R.~S. Williams.
\newblock Dot-product engine for neuromorphic computing: Programming 1t1m
  crossbar to accelerate matrix-vector multiplication.
\newblock In \emph{2016 53nd ACM/EDAC/IEEE Design Automation Conference (DAC)},
  pages 1--6. IEEE, 2016.

\bibitem[Huang et~al.(2019)Huang, Zhou, Zhang, Xiang, Han, Liu, Liu, and
  Kang]{huang2019hardware}
P.~Huang, Z.~Zhou, Y.~Zhang, Y.~Xiang, R.~Han, L.~Liu, X.~Liu, and J.~Kang.
\newblock Hardware implementation of rram based binarized neural networks.
\newblock \emph{APL Materials}, 7\penalty0 (8):\penalty0 081105, 2019.

\bibitem[Jerry et~al.(2017)Jerry, Chen, Zhang, Sharma, Ni, Yu, and
  Datta]{jerry2017ferroelectric}
M.~Jerry, P.-Y. Chen, J.~Zhang, P.~Sharma, K.~Ni, S.~Yu, and S.~Datta.
\newblock Ferroelectric fet analog synapse for acceleration of deep neural
  network training.
\newblock In \emph{2017 IEEE International Electron Devices Meeting (IEDM)},
  pages 6--2. IEEE, 2017.

\bibitem[Jo et~al.(2010)Jo, Chang, Ebong, Bhadviya, Mazumder, and
  Lu]{jo2010nanoscale}
S.~H. Jo, T.~Chang, I.~Ebong, B.~B. Bhadviya, P.~Mazumder, and W.~Lu.
\newblock Nanoscale memristor device as synapse in neuromorphic systems.
\newblock \emph{Nano letters}, 10\penalty0 (4):\penalty0 1297--1301, 2010.

\bibitem[Johnson(2010)]{johnson2010nonlinear}
W.~Johnson.
\newblock Nonlinear electrical networks, 2010.
\newblock URL
  \url{https://sites.math.washington.edu/~reu/papers/2017/willjohnson/directed-networks.pdf}.
\newblock Accessed: 2020-05-31.

\bibitem[Kendall et~al.(2020)Kendall, Pantone, and Nino]{kendall2020deep}
J.~D. Kendall, R.~D. Pantone, and J.~C. Nino.
\newblock Deep learning in memristive nanowire networks.
\newblock \emph{arXiv preprint arXiv:2003.02642}, 2020.

\bibitem[Khan(2018)]{khan2018bidirectional}
A.~F. Khan.
\newblock Bidirectional learning in recurrent neural networks using equilibrium
  propagation.
\newblock Master's thesis, University of Waterloo, 2018.

\bibitem[LeCun et~al.(1998)LeCun, Bottou, Bengio, and
  Haffner]{lecun1998gradient}
Y.~LeCun, L.~Bottou, Y.~Bengio, and P.~Haffner.
\newblock Gradient-based learning applied to document recognition.
\newblock \emph{Proceedings of the IEEE}, 86\penalty0 (11):\penalty0
  2278--2324, 1998.

\bibitem[LeCun et~al.(2006)LeCun, Chopra, Hadsell, Ranzato, and
  Huang]{lecun2006tutorial}
Y.~LeCun, S.~Chopra, R.~Hadsell, M.~Ranzato, and F.~Huang.
\newblock A tutorial on energy-based learning.
\newblock \emph{Predicting structured data}, 1\penalty0 (0), 2006.

\bibitem[Li et~al.(2015)Li, Xia, Gu, Wang, and Yang]{li2015merging}
B.~Li, L.~Xia, P.~Gu, Y.~Wang, and H.~Yang.
\newblock Merging the interface: Power, area and accuracy co-optimization for
  rram crossbar-based mixed-signal computing system.
\newblock In \emph{Proceedings of the 52nd Annual Design Automation
  Conference}, pages 1--6, 2015.

\bibitem[Li et~al.(2018)Li, Belkin, Li, Yan, Hu, Ge, Jiang, Montgomery, Lin,
  Wang, et~al.]{li2018efficient}
C.~Li, D.~Belkin, Y.~Li, P.~Yan, M.~Hu, N.~Ge, H.~Jiang, E.~Montgomery, P.~Lin,
  Z.~Wang, et~al.
\newblock Efficient and self-adaptive in-situ learning in multilayer memristor
  neural networks.
\newblock \emph{Nature communications}, 9\penalty0 (1):\penalty0 1--8, 2018.

\bibitem[Lillicrap et~al.(2020)Lillicrap, Santoro, Marris, Akerman, and
  Hinton]{lillicrap2020backpropagation}
T.~P. Lillicrap, A.~Santoro, L.~Marris, C.~J. Akerman, and G.~Hinton.
\newblock Backpropagation and the brain.
\newblock \emph{Nature Reviews Neuroscience}, pages 1--12, 2020.

\bibitem[Mead(1989)]{mead1989analog}
C.~Mead.
\newblock Analog vlsi and neutral systems.
\newblock \emph{NASA STI/Recon Technical Report A}, 90, 1989.

\bibitem[Merced-Grafals et~al.(2016)Merced-Grafals, D{\'a}vila, Ge, Williams,
  and Strachan]{merced2016repeatable}
E.~J. Merced-Grafals, N.~D{\'a}vila, N.~Ge, R.~S. Williams, and J.~P. Strachan.
\newblock Repeatable, accurate, and high speed multi-level programming of
  memristor 1t1r arrays for power efficient analog computing applications.
\newblock \emph{Nanotechnology}, 27\penalty0 (36):\penalty0 365202, 2016.

\bibitem[Movellan(1991)]{movellan1991contrastive}
J.~R. Movellan.
\newblock Contrastive hebbian learning in the continuous hopfield model.
\newblock In \emph{Connectionist models}, pages 10--17. Elsevier, 1991.

\bibitem[O'Connor et~al.(2018)O'Connor, Gavves, and Welling]{o2018initialized}
P.~O'Connor, E.~Gavves, and M.~Welling.
\newblock Initialized equilibrium propagation for backprop-free training.
\newblock 2018.

\bibitem[Oord et~al.(2016)Oord, Dieleman, Zen, Simonyan, Vinyals, Graves,
  Kalchbrenner, Senior, and Kavukcuoglu]{oord2016wavenet}
A.~v.~d. Oord, S.~Dieleman, H.~Zen, K.~Simonyan, O.~Vinyals, A.~Graves,
  N.~Kalchbrenner, A.~Senior, and K.~Kavukcuoglu.
\newblock Wavenet: A generative model for raw audio.
\newblock \emph{arXiv preprint arXiv:1609.03499}, 2016.

\bibitem[O’Connor et~al.(2019)O’Connor, Gavves, and Welling]{o2019training}
P.~O’Connor, E.~Gavves, and M.~Welling.
\newblock Training a spiking neural network with equilibrium propagation.
\newblock In \emph{The 22nd International Conference on Artificial Intelligence
  and Statistics}, pages 1516--1523, 2019.

\bibitem[{Patil} et~al.(2019){Patil}, {Hua}, {Gonugondla}, {Kang}, and
  {Shanbhag}]{patil2019mram}
A.~D. {Patil}, H.~{Hua}, S.~{Gonugondla}, M.~{Kang}, and N.~R. {Shanbhag}.
\newblock An mram-based deep in-memory architecture for deep neural networks.
\newblock In \emph{2019 IEEE International Symposium on Circuits and Systems
  (ISCAS)}, pages 1--5, 2019.

\bibitem[Rekhi et~al.(2019)Rekhi, Zimmer, Nedovic, Liu, Venkatesan, Wang,
  Khailany, Dally, and Gray]{rekhi2019ams}
A.~S. Rekhi, B.~Zimmer, N.~Nedovic, N.~Liu, R.~Venkatesan, M.~Wang,
  B.~Khailany, W.~J. Dally, and C.~T. Gray.
\newblock Analog/mixed-signal hardware error modeling for deep learning
  inference.
\newblock In \emph{Proceedings of the 56th Annual Design Automation Conference
  2019}, DAC ’19, New York, NY, USA, 2019. Association for Computing
  Machinery.
\newblock ISBN 9781450367257.
\newblock \doi{10.1145/3316781.3317770}.
\newblock URL \url{https://doi.org/10.1145/3316781.3317770}.

\bibitem[Richards et~al.(2019)Richards, Lillicrap, Beaudoin, Bengio, Bogacz,
  Christensen, Clopath, Costa, de~Berker, Ganguli, et~al.]{richards2019deep}
B.~A. Richards, T.~P. Lillicrap, P.~Beaudoin, Y.~Bengio, R.~Bogacz,
  A.~Christensen, C.~Clopath, R.~P. Costa, A.~de~Berker, S.~Ganguli, et~al.
\newblock A deep learning framework for neuroscience.
\newblock \emph{Nature neuroscience}, 22\penalty0 (11):\penalty0 1761--1770,
  2019.

\bibitem[Scellier and Bengio(2017)]{Scellier+Bengio-frontiers2017}
B.~Scellier and Y.~Bengio.
\newblock Equilibrium propagation: Bridging the gap between energy-based models
  and backpropagation.
\newblock \emph{Frontiers in computational neuroscience}, 11, 2017.

\bibitem[Scellier and Bengio(2019)]{scellier2019equivalence}
B.~Scellier and Y.~Bengio.
\newblock Equivalence of equilibrium propagation and recurrent backpropagation.
\newblock \emph{Neural computation}, 31\penalty0 (2):\penalty0 312--329, 2019.

\bibitem[Scellier et~al.(2018)Scellier, Goyal, Binas, Mesnard, and
  Bengio]{scellier2018generalization}
B.~Scellier, A.~Goyal, J.~Binas, T.~Mesnard, and Y.~Bengio.
\newblock Generalization of equilibrium propagation to vector field dynamics.
\newblock \emph{arXiv preprint arXiv:1808.04873}, 2018.

\bibitem[Vaswani et~al.(2017)Vaswani, Shazeer, Parmar, Uszkoreit, Jones, Gomez,
  Kaiser, and Polosukhin]{vaswani2017attention}
A.~Vaswani, N.~Shazeer, N.~Parmar, J.~Uszkoreit, L.~Jones, A.~N. Gomez,
  {\L}.~Kaiser, and I.~Polosukhin.
\newblock Attention is all you need.
\newblock In \emph{Advances in neural information processing systems}, pages
  5998--6008, 2017.

\bibitem[{Vincent} et~al.(2015){Vincent}, {Larroque}, {Locatelli}, {Ben
  Romdhane}, {Bichler}, {Gamrat}, {Zhao}, {Klein}, {Galdin-Retailleau}, and
  {Querlioz}]{vincent2015spin}
A.~F. {Vincent}, J.~{Larroque}, N.~{Locatelli}, N.~{Ben Romdhane},
  O.~{Bichler}, C.~{Gamrat}, W.~S. {Zhao}, J.~{Klein}, S.~{Galdin-Retailleau},
  and D.~{Querlioz}.
\newblock Spin-transfer torque magnetic memory as a stochastic memristive
  synapse for neuromorphic systems.
\newblock \emph{IEEE Transactions on Biomedical Circuits and Systems},
  9\penalty0 (2):\penalty0 166--174, 2015.

\bibitem[Vogt et~al.(2020)Vogt, Hendrix, and Nenzi]{vogt2020ngspice}
H.~Vogt, M.~Hendrix, and P.~Nenzi.
\newblock Ngspice (version 31), 2020.
\newblock URL \url{http://ngspice.sourceforge.net/docs/ngspice-manual.pdf}.

\bibitem[Wang et~al.(2019)Wang, Li, Song, Rao, Belkin, Li, Yan, Jiang, Lin, Hu,
  et~al.]{wang2019reinforcement}
Z.~Wang, C.~Li, W.~Song, M.~Rao, D.~Belkin, Y.~Li, P.~Yan, H.~Jiang, P.~Lin,
  M.~Hu, et~al.
\newblock Reinforcement learning with analogue memristor arrays.
\newblock \emph{Nature Electronics}, 2\penalty0 (3):\penalty0 115--124, 2019.

\bibitem[Whittington and Bogacz(2019)]{whittington2019theories}
J.~C. Whittington and R.~Bogacz.
\newblock Theories of error back-propagation in the brain.
\newblock \emph{Trends in cognitive sciences}, 2019.

\bibitem[Xia and Yang(2019)]{xia2019memristive}
Q.~Xia and J.~J. Yang.
\newblock Memristive crossbar arrays for brain-inspired computing.
\newblock \emph{Nature materials}, 18\penalty0 (4):\penalty0 309--323, 2019.

\bibitem[Zoppo et~al.(2020)Zoppo, Marrone, and Corinto]{zoppo2020equilibrium}
G.~Zoppo, F.~Marrone, and F.~Corinto.
\newblock Equilibrium propagation for memristor-based recurrent neural
  networks.
\newblock \emph{Frontiers in Neuroscience}, 14:\penalty0 240, 2020.

\end{thebibliography}

\newpage
\appendix

\part*{Appendix}

\section{Generalisation of Theorem \ref{thm:gradients} and Proof}
\label{sec:proof}

In this appendix we prove Theorem \ref{thm:gradients}, which we recall here. Theorem \ref{thm:gradients} holds for any \textit{nonlinear resistive network}, that is any circuit consisting of interconnected two-terminal components for which the current-voltage characteristics (defined in section \ref{sec:parameter-gradient}) are well defined and continuous.

\thmgradients*

\paragraph{Sketch of the proof.}
We prove Theorem \ref{thm:gradients} in three steps.
\begin{enumerate}
\item In Section \ref{sec:parameter-gradient}, we state a more general formula for computing the gradient with respect to a parameter of an arbitrary two-terminal component (Theorem \ref{lma:gradients}), and we show that the formula of Theorem \ref{thm:gradients} is a particular case of Theorem \ref{lma:gradients}. More specifically, for every two-terminal component we define a quantity called the pseudo-power of the component. Theorem \ref{lma:gradients} states that the gradient of the loss with respect to a parameter of this component can be expressed in terms of its pseudo-power. In the case of a linear resistor we recover the formula of Theorem \ref{thm:gradients}. (At this stage it then remains to prove Theorem \ref{lma:gradients}.)
\item In Section \ref{sec:general-theorem}, following the work of \citet{johnson2010nonlinear}, we show that in a nonlinear resistive network, the steady state of the circuit imposed by Kirchhoff's laws is a critical point of the total pseudo-power of the circuit (Lemma \ref{lma:power}), which by definition is the sum of the pseudo-powers of its individual components. In this sense we say that nonlinear resistive networks are energy-based models (EBMs) whose energy function is the total pseudo-power.
This result bridges a conceptual gap between energy functions in EBMs (at a mathematical level) and physical energies (at a hardware level). (At this stage it then remains to prove Theorem \ref{lma:gradients}, given Lemma \ref{lma:power}.)
\item In Section \ref{sec:proof-general-theorem}, we follow the general method of \citet{Scellier+Bengio-frontiers2017} to prove Theorem \ref{lma:gradients} using Lemma \ref{lma:power}. Specifically, we proceed in three sub-steps. First, by denoting $\mathcal{P}$ the total pseudo-power of the circuit in the free phase (first phase of training), and $\ell$ the cost function, we show that the total pseudo-power of the circuit in the nudged phase (second phase of training, when we inject currents $I_k = - \beta \frac{\partial \ell}{\partial \widehat{Y}_k}$ at output nodes) is equal to $\mathscr{P} = \mathcal{P} + \beta \ell$. Second, we use the criticality condition of Lemma \ref{lma:power} to state and prove a result about $\mathscr{P}$ (Lemma \ref{lma:aux}). Third, we show that Theorem \ref{lma:gradients} is a consequence of Lemma \ref{lma:aux}.
\end{enumerate}

\subsection{Gradient With Respect to a Parameter of an Arbitrary Two-Terminal Component}
\label{sec:parameter-gradient}

In this subsection, we state Theorem \ref{lma:gradients}, and we show that Theorem \ref{thm:gradients} is a special case of it. Theorem \ref{lma:gradients} states that the gradient of the loss with respect to the parameter of an arbitrary two-terminal component can be expressed in terms of a quantity called the \textit{pseudo-power} of the component. The pseudo-power is itself defined in terms of the current-voltage characteristic of the component.

\paragraph{Current-voltage characteristic of a two-terminal component.}
Consider a two-terminal component (such as a resistor or a diode) with end nodes $i$ and $j$. Its behaviour is determined by its current-voltage characteristic, which is a function $\gamma_{ij}$ that takes as input the voltage drop $\Delta V_{ij} = V_i - V_j$ across the component and returns the current $I_{ij} = \gamma_{ij} \left( \Delta V_{ij} \right)$ moving from node $i$ to node $j$ in response to the voltage drop $\Delta V_{ij}$. Since the current flowing from $i$ to $j$ is the negative of the current flowing from $j$ to $i$, we have by definition:
\begin{equation}
    \label{eq:antisymmetry}
    \forall i,j, \qquad \gamma_{ij} \left( \Delta V_{ij} \right) = - \gamma_{ji} \left( \Delta V_{ji} \right)
\end{equation}
where $\Delta V_{ji} = - \Delta V_{ij}$.

For example, the current-voltage characteristic of a linear resistor of conductance $g_{ij}$ linking node $i$ to node $j$ is given by Ohm's law: $I_{ij} = g_{ij} \Delta V_{ij}$, which by definition of $\gamma_{ij}$ implies that
\begin{equation}
    \label{eq:IVresistor}
    \gamma_{ij} \left( \Delta V_{ij} \right) = g_{ij} \Delta V_{ij}.
\end{equation}

\paragraph{Pseudo-power of a two-terminal component.}
For each two-terminal component of current-voltage characteristic $I_{ij} = \gamma_{ij}(\Delta V_{ij})$, we define $p_{ij}(\Delta V_{ij})$ as the primitive function of $\gamma_{ij}(\Delta V_{ij})$ that vanishes at $0$, i.e.
\begin{equation}
    \label{eq:pseudo-power}
    p_{ij}(\Delta V_{ij}) = \int_0^{\Delta V_{ij}} \gamma_{ij}(v) dv.
\end{equation}
The quantity $p_{ij} \left( \Delta V_{ij} \right)$ has the physical dimensions of power, being a product of a voltage and a current. We call $p_{ij} \left( \Delta V_{ij} \right)$ the \textit{pseudo-power} along the branch from $i$ to $j$, following the terminology of \citet{johnson2010nonlinear}. Note that as a consequence of Eq.~\ref{eq:antisymmetry} we have
\begin{equation}
    \label{eq:symmetry}
    \forall i,j, \qquad p_{ij}(\Delta V_{ij}) = p_{ji}(\Delta V_{ji}),
\end{equation}
i.e. the pseudo-power from $i$ to $j$ and the pseudo-power from $j$ to $i$ are the same thing. We call this property the \textit{pseudo-power symmetry}.

For example, in the case of a linear resistor of conductance $g_{ij}$ linking node $i$ to node $j$, the current-voltage characteristic $\gamma_{ij}(\Delta V_{ij})$ is given by Eq.~\ref{eq:IVresistor}, and its corresponding pseudo-power is:
\begin{equation}
\label{eq:pseudo-resistor}
p_{ij}(\Delta V_{ij}) = \frac{1}{2} g_{ij} \Delta V_{ij}^2.
\end{equation}
In this case, the pseudo-power is half the physical power dissipated in the resistor.

\begin{restatable}{thm}{lmagradients}
\label{lma:gradients}
Consider a nonlinear resistive network, and let $w_{ij}$ denote an adjustable parameter of a two-terminal component whose terminals are $i$ and $j$. Denote $\Delta V_{ij}^0$ the voltage drop across this two-terminal component in the free phase (when no current is sourced at output nodes), and $\Delta V_{ij}^\beta$ the voltage drop in the nudged phase (when a current $I_k = - \beta \frac{\partial \ell}{\partial \widehat{Y}_k}$ is sourced at each output node $\widehat{Y}_k$). Then, the gradient of the loss ${\mathcal L}  = \ell \left( \widehat{Y},Y \right)$ with respect to $w_{ij}$ can be estimated, in the limit $\beta \to 0$, as
\begin{equation}
\label{eq:thm-ep}
\frac{\partial {\mathcal L}}{\partial w_{ij}} =
\lim_{\beta \to 0} \frac{1}{\beta} \left( \frac{\partial p_{ij} \left( \Delta V^\beta_{ij} \right)}{\partial w_{ij}} -  \frac{\partial p_{ij} \left( \Delta V^0_{ij} \right)}{\partial w_{ij}} \right).
\end{equation}
\end{restatable}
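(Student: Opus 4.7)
The plan is to combine the variational characterisation of steady states established in Lemma~\ref{lma:power} with the standard Equilibrium Propagation argument of \citet{Scellier+Bengio-frontiers2017}. The first step is to show that, although the free circuit's total pseudo-power $\mathcal{P}$ is minimised at the free-phase equilibrium $s^0$, the nudged-phase equilibrium $s^\beta$ is a critical point (in the vector of free node voltages $s$) of the augmented functional $\mathcal{P}(s) + \beta\, \ell(\widehat{Y}(s), Y)$. To see this, I would compute $\partial \mathcal{P}/\partial V_i$ at each free node $i$ and recognise it, via the antisymmetry of the $\gamma_{ij}$, as the net current leaving that node through the resistive branches. Kirchhoff's current law with the injected current $I_k = -\beta\, \partial \ell/\partial \widehat{Y}_k$ at each output node $\widehat{Y}_k$ then gives $\partial \mathcal{P}/\partial V_{\widehat{Y}_k} + \beta\, \partial \ell/\partial \widehat{Y}_k = 0$, which is exactly the criticality condition for the augmented functional; at non-output free nodes it reduces to the free-phase condition $\partial \mathcal{P}/\partial V_i = 0$ of Lemma~\ref{lma:power}. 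This is the auxiliary Lemma~\ref{lma:aux} mentioned in the sketch.

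With $s^\beta$ characterised as a critical point of $\mathcal{P} + \beta \ell$, I would next introduce the scalar equilibrium map $\Phi(w, \beta) := \mathcal{P}\bigl(s^\beta; w\bigr) + \beta\, \ell\bigl(s^\beta\bigr)$ and invoke the envelope theorem. Since $s^\beta$ is stationary in $s$, the derivatives of $\Phi$ pick up only the explicit dependence on $w$ and $\beta$:
\begin{equation*}
\frac{\partial \Phi}{\partial \beta} = \ell\bigl(s^\beta\bigr), \qquad \frac{\partial \Phi}{\partial w_{ij}} = \frac{\partial p_{ij}\bigl(\Delta V_{ij}^\beta\bigr)}{\partial w_{ij}},
\end{equation*}
the second identity using that $w_{ij}$ appears only in the pseudo-power of its own component. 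Equality of mixed partials then yields
\begin{equation*}
\frac{\partial}{\partial w_{ij}} \ell\bigl(s^\beta\bigr) = \frac{\partial}{\partial \beta} \frac{\partial p_{ij}\bigl(\Delta V_{ij}^\beta\bigr)}{\partial w_{ij}},
\end{equation*}
and writing the right-hand side as a one-sided limit at $\beta = 0$ produces Eq.~\ref{eq:thm-ep}. Specialising to a linear resistor, where $p_{ij}(\Delta V_{ij}) = \tfrac{1}{2} g_{ij} \bigl(\Delta V_{ij}\bigr)^2$ and $w_{ij} = g_{ij}$, the right-hand side becomes $\tfrac{1}{2\beta}\bigl[(\Delta V_{ij}^\beta)^2 - (\Delta V_{ij}^0)^2\bigr]$, immediately recovering Theorem~\ref{thm:gradients} as the stated special case.

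The main obstacle I expect to encounter is justifying the smoothness hypotheses underlying the envelope theorem and Schwarz's theorem, since nonlinear resistive networks can include non-smooth components (e.g.\ diodes) and the total pseudo-power need not be globally strictly convex. The clean way forward is to work under a local $C^2$ regularity assumption together with a nondegenerate Hessian of $\mathcal{P} + \beta \ell$ at $s^\beta$ for $\beta$ in a neighbourhood of $0$, so that the implicit function theorem applied to $\nabla_s(\mathcal{P} + \beta \ell) = 0$ produces a $C^1$ branch of equilibria $\beta \mapsto s^\beta$; all mixed-partial exchanges are then legitimate and Eq.~\ref{eq:thm-ep} holds as an asymptotic identity as $\beta \to 0$.
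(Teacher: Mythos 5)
Your proposal is correct and follows essentially the same route as the paper's proof in Appendix \ref{sec:proof}: you identify the nudged steady state as a critical point of the augmented pseudo-power $\mathscr{P} = \mathcal{P} + \beta\,\ell$, apply the envelope theorem to the equilibrium value function (the paper's $H(\theta,\beta) = \mathscr{P}(\theta,\beta,V_\theta^\beta)$, your $\Phi$), exchange mixed partials, and specialize to a linear resistor via $\partial p_{ij}/\partial w_{ij} = \tfrac{1}{2}(\Delta V_{ij})^2$. The only slips are cosmetic: what you call Lemma \ref{lma:aux} is in the paper the mixed-partials identity rather than the criticality of $s^\beta$ (which the paper gets from Lemma \ref{lma:power} applied to the nudged circuit, with $\beta\,\ell$ as the pseudo-power of the current sources), and only the first-order critical-point condition is used, not minimality --- your closing regularity discussion (local $C^2$ smoothness and a nondegenerate Hessian for a differentiable branch $\beta \mapsto s^\beta$) makes explicit hypotheses the paper leaves implicit.
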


In the case of a resistor of conductance $g_{ij}$, the adjustable parameter is $w_{ij} = g_{ij}$ and the pseudo-power $p_{ij} \left( \Delta V_{ij} \right)$ is given by Eq.~\ref{eq:pseudo-resistor}, so that
\begin{equation}
    \frac{\partial p_{ij} \left( \Delta V_{ij} \right)}{\partial g_{ij}} = \frac{1}{2} \left( \Delta V_{ij} \right)^2.
\end{equation}
Thus, Theorem \ref{thm:gradients} is a special case of Theorem \ref{lma:gradients}. It remains to prove Theorem \ref{lma:gradients}, which we do next.

\subsection{Nonlinear Resistive Networks are Energy-Based Models}
\label{sec:general-theorem}

In this subsection we show that, in a nonlinear resistive network, the steady state of the circuit characterized by Kirchhoff's laws is a critical point of a function called the \textit{total pseudo-power} (Lemma \ref{lma:power}). The total pseudo-power is the sum of the pseudo-powers of the individual components of the circuit.

We number the nodes of the circuit $i=1,2,\ldots,N$ and denote the node voltages $V_1, V_2, \ldots, V_N$.

\paragraph{Configuration of the circuit.}
We call a vector of voltage values $V = \left( V_1, V_2, \ldots, V_N \right)$ a \textit{configuration}. Importantly, according to our definition, any vector of voltage values is a configuration, even those that are not compatible with the laws governing electrical circuits (Kirchhoff's current law).

\paragraph{Total pseudo-power of a configuration.}
Recall the definition of the pseudo-power of a two-terminal component (Eq.~\ref{eq:pseudo-power}). We define the \textit{total pseudo-power} of a configuration $V = \left( V_1, V_2, \ldots, V_N \right)$ as the sum of pseudo-powers along all branches:
\begin{equation}
    \label{eq:total-pseudo-power}
    \mathcal{P}(V_1, \cdots, V_N) = \sum_{i<j} p_{ij}(V_i - V_j).
\end{equation}
Notice that the pseudo-power symmetry (Eq.~\ref{eq:symmetry}) guarantees that this definition does not depend on node ordering.

Consider as an example the case of a linear resistance network, i.e. a circuit composed of nodes interconnected by linear resistors. The pseudo-power of each individual resistor is given by Eq.~\ref{eq:pseudo-resistor} and is half the power dissipated by the resistor. Therefore the total pseudo-power of the circuit is half the total power dissipated by the circuit:
\begin{equation}
    \mathcal{P}(V_1,V_2,\ldots,V_N) = \frac{1}{2} \sum_{i<j} g_{ij} \left(V_j-V_i \right)^2.
    \label{eq:power-linear-resistance-network}
\end{equation}

We stress that $\mathcal{P}$ is a mathematical function defined on any configuration $V_1, V_2, \ldots, V_N$, even those that are not compatible with Kirchhoff's current law.

\paragraph{Steady state of the circuit.}
The configuration of the circuit that is physically realized is imposed by Kirchhoff's current law (KCL). We denote $V_1^\star$, $V_2^\star$, $\ldots$, $V_N^\star$ the voltage values imposed by KCL, and we call $V^\star = \left( V_1^\star, V_2^\star, \ldots, V_N^\star \right)$ the \textit{steady state} of the circuit. Specifically, for every (internal or output) floating node $i$, KCL implies $\sum_{j=1}^N I_{ij} = 0$, which rewrites
\begin{equation}
    \label{eq:stationary}
    \sum_{j=1}^N \gamma_{ij} \left( V_i^\star-V_j^\star \right) = 0.
\end{equation}

\begin{restatable}[\citet{johnson2010nonlinear}]{lma}{lmacritical}
\label{lma:power}
The steady state of the circuit, denoted $\left( V_1^\star, V_2^\star, \ldots, V_N^\star \right)$, is a critical point\footnote{With further assumptions on the current-voltage functions $\gamma_{ij}$, \citet{christianson2007dirichlet} and \citet{johnson2010nonlinear} show that the function $\mathcal{P}$ is convex, so that the steady state is the global minimum of $\mathcal{P}$. In our case however, in order to prove Theorem \ref{lma:gradients}, all one needs in the framework of Equilibrium Propagation is the first order condition, i.e. the fact that the steady state is a critical point of $\mathcal{P}$, not necessarily a global or local minimum.}
of the total pseudo-power: for every floating node $i$, we have
\begin{equation}
    \frac{\partial \mathcal{P}}{\partial V_i} \left( V_1^\star, V_2^\star, \ldots, V_N^\star \right) = 0.
\end{equation}
\end{restatable}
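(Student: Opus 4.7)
The plan is to compute $\partial \mathcal{P}/\partial V_i$ directly from the definition and then recognize the resulting expression as the total current flowing out of node $i$, at which point KCL gives the result.

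First I would isolate the terms of $\mathcal{P}(V_1,\ldots,V_N) = \sum_{k<\ell} p_{k\ell}(V_k-V_\ell)$ that actually depend on $V_i$. Only pairs $(k,\ell)$ with $\{k,\ell\} \ni i$ contribute, so I would split the relevant sum into the case $k=i<\ell$ and the case $k<\ell=i$. Differentiating and using $p_{k\ell}' = \gamma_{k\ell}$ gives
\begin{equation}
\frac{\partial \mathcal{P}}{\partial V_i} = \sum_{j>i} \gamma_{ij}(V_i-V_j) \;-\; \sum_{j<i} \gamma_{ji}(V_j-V_i).
\end{equation}
The next step is to unify these two sums using the antisymmetry relation \eqref{eq:antisymmetry}, which implies $\gamma_{ji}(V_j-V_i) = -\gamma_{ij}(V_i-V_j)$. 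Substituting into the second sum turns the minus sign into a plus and the two sums merge into a single sum over all $j \neq i$:
\begin{equation}
\frac{\partial \mathcal{P}}{\partial V_i} = \sum_{j \neq i} \gamma_{ij}(V_i-V_j) = \sum_{j \neq i} I_{ij}.
\end{equation}

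This is precisely the total current leaving node $i$ through all the two-terminal components incident to it. Evaluating at the steady state $V^\star$ and invoking KCL at the floating node $i$, namely Eq.~\eqref{eq:stationary}, the right-hand side vanishes. This gives $\partial \mathcal{P}/\partial V_i(V^\star)=0$, which is the claim.

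The argument has no real obstacle; the only subtlety is the bookkeeping associated with the ordering convention $i<j$ in the definition of $\mathcal{P}$. In particular, one must be careful that the pseudo-power symmetry (Eq.~\eqref{eq:symmetry}), which makes $\mathcal{P}$ well-defined regardless of node ordering, and the antisymmetry of $\gamma_{ij}$ are consistent with each other (they are: differentiating $p_{ij}(\Delta V_{ij}) = p_{ji}(\Delta V_{ji})$ in $\Delta V_{ij}$ yields exactly $\gamma_{ij}(\Delta V_{ij}) = -\gamma_{ji}(\Delta V_{ji})$), so the computation above does not depend on which orientation one picks for each branch. Note that the lemma does not require any convexity or monotonicity hypothesis on the $\gamma_{ij}$: only the first-order (critical point) condition is asserted, and this follows purely from the chain rule together with KCL.
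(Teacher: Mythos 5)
Your proposal is correct and follows essentially the same route as the paper's proof: direct differentiation of the total pseudo-power, using $p_{ij}' = \gamma_{ij}$, followed by Kirchhoff's current law (Eq.~\ref{eq:stationary}) at the steady state. The only difference is cosmetic bookkeeping: you merge the $j<i$ and $j>i$ terms via the antisymmetry of $\gamma_{ij}$ (Eq.~\ref{eq:antisymmetry}), while the paper handles the ordering implicitly via the pseudo-power symmetry (Eq.~\ref{eq:symmetry}) -- two equivalent conventions, as you yourself observe.
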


We say in this sense that the circuit is an energy-based model, whose energy function is the total pseudo-power.

\begin{proof}[Proof of Lemma \ref{lma:power}]
We use the definition of the total pseudo-power (Eq.~\ref{eq:total-pseudo-power}), the pseudo-power symmetry (Eq.~\ref{eq:symmetry}), the definition of the pseudo-power (Eq.~\ref{eq:pseudo-power}) and the fact that the steady state of the circuit satisfies Kirchhoff's current law (Eq.~\ref{eq:stationary}). For every floating node $i$ we have:
\begin{align}
\frac{\partial \mathcal{P}}{\partial V_i} \left( V_1^\star, V_2^\star, \ldots, V_N^\star \right) = & \sum_j \frac{\partial p_{ij}}{\partial V_i}(V_i^\star-V_j^\star)\\
    = & \sum_j \gamma_{ij}(V_i^\star-V_j^\star) = 0.
\end{align}
\end{proof}

Lemma \ref{lma:power} generalizes a well-known property of linear resistance networks (i.e. circuits composed of linear resistors) called the \textit{principle of minimum dissipated power} \citep{baez2015compositional}. This principle states that in a linear resistance network, if the voltages are imposed at a set of input nodes, the circuit will choose the voltages at other nodes so as to minimize the total power dissipated in the resistors (i.e. minimize Eq.~\ref{eq:power-linear-resistance-network}).

\subsection{Proof of Theorem \ref{lma:gradients}}
\label{sec:proof-general-theorem}

Equilibrium Propagation \citep{Scellier+Bengio-frontiers2017} applies to models that possess an energy function, in the sense of Lemma \ref{lma:power}. Equipped with Lemma \ref{lma:power}, we can therefore prove Theorem \ref{lma:gradients} by following the general method of \citet{Scellier+Bengio-frontiers2017}. We do this in three steps:
\begin{itemize}
\item First, we rewrite the formula of Theorem \ref{lma:gradients} in terms of the total pseudo-power of the first phase (free phase), denoted $\mathcal{P}$, and the cost function $\ell$.
\item Second, we introduce the total pseudo-power of the second phase (nudged phase), denoted $\mathscr{P}$, which we show to be equal to $\mathscr{P} = \mathcal{P} + \beta \ell$, and we rewrite Theorem \ref{lma:gradients} in terms of $\mathscr{P}$.
\item Third, we prove Theorem \ref{lma:gradients} using Lemma \ref{lma:power}.
\end{itemize}

\paragraph{First step. Rewriting Theorem \ref{lma:gradients} in terms of the total pseudo-power of the first phase $\mathcal{P}$ and the cost function $\ell$.}

Let $V$ be the configuration of floating node voltages, which includes the internal nodes and output nodes. $\theta$ denotes the vector of adjustable parameters (conductances), and $\mathcal{P}(X,V,\theta)$ denotes the total pseudo-power of the circuit in the first phase of training (free phase).

Consider an adjustable parameter $w_{ij}$ of a component whose terminals are $i$ and $j$. This parameter contributes to $\mathcal{P}(X,V,\theta)$ only through the pseudo-power of the component it belongs to, i.e. only through the term $p_{ij}(V_i - V_j)$.
Thus, we have (see Eq.~\ref{eq:total-pseudo-power})
\begin{equation}
    \label{eq:dP-dw}
   \frac{\partial \mathcal{P}(X,V,\theta)}{\partial w_{ij}} = \frac{\partial p_{ij}(V_i - V_j)}{\partial w_{ij}}.
\end{equation}
Therefore, the formula to be proved (Theorem \ref{lma:gradients}) rewrites:
\begin{equation}
\frac{\partial {\mathcal L}(X,Y,\theta)}{\partial w_{ij}}
= \lim_{\beta \to 0} \frac{1}{\beta} \left( \frac{\partial \mathcal{P} \left( X,V^\beta,\theta \right)}{\partial w_{ij}} -  \frac{\partial \mathcal{P} \left( X,V^0,\theta \right)}{\partial w_{ij}} \right).
\label{eq:to-prove1}
\end{equation}

Here $V^0$ and $V^\beta$ are the steady states of the circuit in the first phase (free phase) and second phase (nudged phase), respectively. Since the formula of Eq.~\ref{eq:to-prove1} is to be proved for all parameters $w_{ij}$ of the circuit, we can also rewrite Theorem \ref{lma:gradients} in the more compact form:
\begin{equation}
\frac{\partial {\mathcal L}}{\partial \theta}(X,Y,\theta)
= \lim_{\beta \to 0} \frac{1}{\beta} \left( \frac{\partial \mathcal{P}}{\partial \theta}\left( X,V^\beta,\theta \right) -  \frac{\partial \mathcal{P}}{\partial \theta}\left( X,V^0,\theta \right) \right).
\label{eq:to-prove2}
\end{equation}

Recall that the loss to be optimized (Eq.~\ref{eq:loss}) is
\begin{equation}
{\mathcal L}(X,Y,\theta) = \ell(V^0,Y).
\end{equation}
Note the difference between the loss ${\mathcal L}$ and the cost function $\ell$: the cost is defined for any configuration $V$, whereas the loss is the cost value at the steady state $V^0$. Recall here that ${\mathcal L}$ depends on $X$ and $\theta$ through $V^0$. Now, the formula to be proved (Eq.~\ref{eq:to-prove2}) rewrites:
\begin{equation}
\frac{d}{d\theta} \ell(V^0,Y)
= \lim_{\beta \to 0} \frac{1}{\beta} \left( \frac{\partial \mathcal{P}}{\partial \theta}\left( X,V^\beta,\theta \right) -  \frac{\partial \mathcal{P}}{\partial \theta}\left( X,V^0,\theta \right) \right).
\label{eq:to-prove3}
\end{equation}
Differentiation with respect to $\theta$ on the left-hand side is to be understood through $V^0$.

\paragraph{Second step. Rewriting Theorem \ref{lma:gradients} in terms of the total pseudo-power of the second phase $\mathscr{P}$.} Recall that $\mathcal{P}(X,V,\theta)$ is the total pseudo-power in the first phase of training (free phase). In the second phase of training though (nudged phase), a current $I_k = -\gamma_k^\beta(V)$ is sourced at every floating node\footnote{In section \ref{sec:electrical-networks}, we have considered a cost function $\ell(\widehat{Y},Y)$ which depends only on output node voltages $\widehat{Y}$, not on internal node voltages. In this context, in the second phase of EqProp we only source currents at output nodes. However, this setting can be directly generalized to the case of a cost function $\ell(V,Y)$ that depends on internal node voltages too.} $V_k$, where
\begin{equation}
    \gamma_k^\beta(V) = \beta  \; \frac{\partial \ell}{\partial V_k}(V,Y)
\end{equation}
represents the current flowing from node $k$ outwards (towards the current source), in agreement with the sign convention in the definition of the current-voltage characteristic adopted in section \ref{sec:parameter-gradient}. We can then define
\begin{equation}
    p^\beta(V) = \beta  \; \ell(V,Y),
\end{equation}
which plays the role of pseudo-power associated to all the currents $I_k$ as it satisfies:
\begin{equation}
\forall k, \qquad \gamma_k^\beta(V) = \frac{\partial p^\beta}{\partial V_k}(V).
\end{equation}
Thus, in the second phase of training (nudged phase), the total pseudo-power of the circuit is
\begin{equation}
    \label{eq:relationship}
    \mathscr{P}(\theta,\beta,V) = \mathcal{P}(X,V,\theta) + \beta \; \ell(V,Y).
\end{equation}
Subsequently, we drop the input $X$ and target $Y$ in the notations, since they are static and do not play any role in the proof of Theorem \ref{lma:gradients}. With these notations the total pseudo-power of the first phase (free phase) is $\mathscr{P}(\theta,0,V)$, and the total pseudo-power of the second phase (nudged phase) is $\mathscr{P}(\theta,\beta,V)$.

From now on we also rewrite $V_\theta^0$ and $V_\theta^\beta$ the steady states of the circuit in the first phase (free phase) and second phase (nudged phase), respectively, to stress that they depend on the parameter $\theta$. From Lemma \ref{lma:power}, $V_\theta^0$ is a critical point of $\mathscr{P}(\theta,0,\cdot)$, and $V_\theta^\beta$ is a critical point of $\mathscr{P}(\theta,\beta,\cdot)$. With these notations, the formula to be proved (Eq.~\ref{eq:to-prove3}) rewrites:
\begin{equation}
\frac{d}{d\theta} \frac{\partial \mathscr{P}}{\partial \beta}(\theta,0,V_\theta^0)
= \lim_{\beta \to 0} \frac{1}{\beta} \left( \frac{\partial \mathscr{P}}{\partial \theta}\left( \theta,\beta,V_\theta^\beta \right) -  \frac{\partial \mathscr{P}}{\partial \theta}\left( \theta,0,V_\theta^0 \right) \right),
\label{eq:to-prove4}
\end{equation}
or more compactly\footnote{Note that the notations $\frac{d}{d\theta}$ and $\frac{\partial}{\partial \theta}$ (as well as $\frac{d}{d\beta}$ and $\frac{\partial}{\partial \beta}$) have two different meanings. The notation $\frac{\partial \mathscr{P}}{\partial \theta}(\theta,\beta,V)$ represents the \textit{partial derivative} of $\mathscr{P}$ with respect to its first argument ($\theta$) ; this excludes differentiation through $V_\theta^\beta$. The notation $\frac{d}{d\theta} f(\theta,\beta,V_\theta^\beta)$ represents the \textit{total derivative} of the expression with respect to $\theta$: this includes the differentiation paths through both the first argument of $f$ and through $V_\theta^\beta$.}:
\begin{equation}
\frac{d}{d\theta} \frac{\partial \mathscr{P}}{\partial \beta}(\theta,0,V_\theta^0) = \left. \frac{d}{d\beta} \right|_{\beta=0} \frac{\partial \mathscr{P}}{\partial \theta}(\theta,\beta,V_\theta^\beta),
\label{eq:to-prove5}
\end{equation}
where $\left. \frac{d}{d\beta} \right|_{\beta=0}$ represents the total derivative of the expression, evaluated at the point $\beta=0$.

\paragraph{Third step. Statement and Proof of Lemma \ref{lma:aux}.}

The formula to be proved (Eq.~\ref{eq:to-prove5}) is a direct consequence of the following result proved in \citet{Scellier+Bengio-frontiers2017}, which we also prove here for completeness.

\begin{lma}[\citet{Scellier+Bengio-frontiers2017}]
\label{lma:aux}
For any value of $\beta$, we have the relationship:
\begin{equation}
\frac{d}{d\theta} \frac{\partial \mathscr{P}}{\partial \beta}(\theta,\beta,V_\theta^\beta) = \frac{d}{d\beta} \frac{\partial \mathscr{P}}{\partial \theta}(\theta,\beta,V_\theta^\beta).
\end{equation}
\end{lma}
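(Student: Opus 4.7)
The plan is to use an envelope-theorem style argument centered on the ``equilibrium pseudo-power'' function
\begin{equation}
F(\theta,\beta) \defeq \mathscr{P}(\theta,\beta,V_\theta^\beta).
\end{equation}
The key point is that by Lemma \ref{lma:power}, $V_\theta^\beta$ is a critical point of $\mathscr{P}(\theta,\beta,\cdot)$, i.e.\ $\frac{\partial \mathscr{P}}{\partial V}(\theta,\beta,V_\theta^\beta) = 0$ for every $(\theta,\beta)$. This criticality is what decouples the implicit dependence of $V_\theta^\beta$ on the parameters from the rest of the derivative computation, and it is the only nontrivial input the proof requires.

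First I would compute $\frac{\partial F}{\partial \theta}$ by the chain rule, getting
\begin{equation}
\frac{\partial F}{\partial \theta}(\theta,\beta) = \frac{\partial \mathscr{P}}{\partial \theta}(\theta,\beta,V_\theta^\beta) + \frac{\partial \mathscr{P}}{\partial V}(\theta,\beta,V_\theta^\beta)\cdot \frac{\partial V_\theta^\beta}{\partial \theta}.
\end{equation}
The second term vanishes by the criticality condition, so $\frac{\partial F}{\partial \theta} = \frac{\partial \mathscr{P}}{\partial \theta}(\theta,\beta,V_\theta^\beta)$. By an identical argument applied to $\beta$, one gets $\frac{\partial F}{\partial \beta} = \frac{\partial \mathscr{P}}{\partial \beta}(\theta,\beta,V_\theta^\beta)$. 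These two identities are just the envelope theorem applied to the implicit equilibrium.

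Now I would invoke Schwarz's theorem (equality of mixed partials) on $F$: under mild smoothness assumptions on $\mathscr{P}$ and on $\theta\mapsto V_\theta^\beta$, $\beta\mapsto V_\theta^\beta$ (which in turn follow from the implicit function theorem applied to the equilibrium condition $\frac{\partial \mathscr{P}}{\partial V}=0$), $F$ is $C^2$ and
\begin{equation}
\frac{\partial^2 F}{\partial \theta\, \partial \beta} = \frac{\partial^2 F}{\partial \beta\, \partial \theta}.
\end{equation}
Substituting the two envelope identities above into this equality gives exactly the claim of Lemma \ref{lma:aux}.

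The only delicate point I foresee is the regularity needed to apply Schwarz: one needs the equilibrium map $(\theta,\beta)\mapsto V_\theta^\beta$ to be continuously differentiable. This is typically established by invoking the implicit function theorem on the criticality equation, which requires that the Hessian $\frac{\partial^2 \mathscr{P}}{\partial V^2}$ be non-degenerate at the equilibrium. In the EqProp framework one usually takes this as a standing assumption (and for nonlinear resistive networks it is guaranteed by the convexity results of \citet{christianson2007dirichlet,johnson2010nonlinear}), so the argument goes through without further work and Theorem~\ref{lma:gradients} then follows by evaluating the identity of Lemma \ref{lma:aux} at $\beta=0$ and recognizing both sides as the relevant limits in Eq.~\ref{eq:to-prove5}.
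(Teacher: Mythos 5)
Your proposal is correct and follows essentially the same route as the paper's proof: the paper likewise introduces the equilibrium function $H(\theta,\beta) = \mathscr{P}(\theta,\beta,V_\theta^\beta)$, eliminates the $\frac{\partial \mathscr{P}}{\partial V}$ terms via the criticality condition of Lemma~\ref{lma:power}, and concludes by symmetry of the mixed second derivatives of $H$. Your extra remark on regularity (invoking the implicit function theorem on the equilibrium condition) is a sensible refinement that the paper leaves implicit, though note that convexity of $\mathcal{P}$ alone gives only positive semi-definiteness of the Hessian, not the non-degeneracy the implicit function theorem requires.
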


\begin{proof}[Proof of Lemma \ref{lma:aux}]
Let us define a function $H$ of the two arguments $\theta$ and $\beta$:
\begin{equation}
  H \left( \theta,\beta \right) = \mathscr{P} \left(\theta,\beta,V_\theta^\beta \right).
\end{equation}
Note that $H$ is a function of $(\theta,\beta)$ not only through $\mathscr{P}(\theta,\beta,\cdot)$ but also through $V_\theta^\beta$. Using the chain rule of differentiation and Lemma \ref{lma:power} we have
\begin{align}
  \frac{\partial H}{\partial \beta}(\theta,\beta)
  & = \frac{\partial \mathscr{P}}{\partial \beta} \left( \theta,\beta,V_\theta^\beta \right)
  + \underbrace{\frac{\partial \mathscr{P}}{\partial V} \left( \theta,\beta,V_\theta^\beta \right)}_{= \; 0} \cdot \frac{\partial V_\theta^\beta}{\partial \beta} \\
  & = \frac{\partial \mathscr{P}}{\partial \beta} \left( \theta,\beta,V_\theta^\beta \right).
\end{align}
Differentiating this expression with respect to $\theta$, we get
\begin{equation}
  \frac{\partial^2 H}{\partial \theta \partial \beta}(\theta,\beta)
  = \frac{d}{d\theta} \frac{\partial \mathscr{P}}{\partial \beta} \left( \theta,\beta,V_\theta^\beta \right).
\end{equation}
Similarly, we have that
\begin{align}
  \frac{\partial H}{\partial \theta}(\theta,\beta)
  & = \frac{\partial \mathscr{P}}{\partial \theta} \left( \theta,\beta,V_\theta^\beta \right)
  + \underbrace{\frac{\partial \mathscr{P}}{\partial V} \left( \theta,\beta,V_\theta^\beta \right)}_{= \; 0} \cdot \frac{\partial V_\theta^\beta}{\partial \theta} \\
  & = \frac{\partial \mathscr{P}}{\partial \theta} \left( \theta,\beta,V_\theta^\beta \right),
\end{align}
so that
\begin{equation}
  \frac{\partial^2 H}{\partial \beta \partial \theta}(\theta,\beta)
  = \frac{d}{d\beta} \frac{\partial \mathscr{P}}{\partial \theta} \left( \theta,\beta,V_\theta^\beta \right).
\end{equation}
Finally, we conclude using the symmetry of second derivatives of $H$:
\begin{equation}
  \frac{\partial^2 H}{\partial \theta \partial \beta} \left( \theta,\beta \right)
  = \frac{\partial^2 H}{\partial \beta \partial \theta} \left( \theta,\beta \right).
\end{equation}
\end{proof}

\clearpage
\section{General Applicability of EqProp (Complement of Section \ref{sec:eqprop})}
\label{sec:general-applicability}

In section \ref{sec:eqprop} we have presented the setting of classification and regression tasks where the goal is to predict a target $Y$ given an input $X$.

In this appendix, we show the general applicability of EqProp.
In section \ref{sec:loss}, we give examples of loss functions and the corresponding gradient currents to be sourced in the second phase of training (nudged phase).
We then show in section \ref{sec:gan} that EqProp extends well beyond the setting of classification and regression tasks, and can also be used in the setting of generative adversarial networks \citep{goodfellow2014generative}.

\subsection{Loss Functions}
\label{sec:loss}

In the supervised setting, the goal of training is to minimize the expected loss
\begin{equation}
J(\theta) = \mathbb{E}_{(X,Y) \sim p(X,Y)} \left[ \mathcal{L}(X,Y,\theta) \right],
\end{equation}
over input-target pairs $(X,Y)$ drawn from a data distribution $p(X,Y)$. Recall that the loss $\mathcal{L}$ of Eq.~\ref{eq:loss} for a given input-target pair $(X,Y)$ is of the form:
\begin{equation}
\label{eq:loss-repeat}
\mathcal{L}(X,Y,\theta) = \ell \left( \widehat{Y},Y \right),
\end{equation}
where $\widehat{Y}$ is the steady state of output nodes at inference, and $\ell(\widehat{Y},Y)$ is an arbitrary differentiable cost function.

We study here two examples of common cost functions $\ell$ and give the corresponding currents to be sourced in the second phase of training (nudged phase).

\subsubsection{Squared Error Loss}

The most straightforward example is the squared error cost function, given by
\begin{equation}
\ell(\widehat{Y},Y) = \frac{1}{2} \sum_{k=1}^K \left( \widehat{Y}_k-Y_k \right)^2,
\end{equation}
which sums the quadratic distances between each output voltage $\widehat{Y}_k$ and its corresponding target voltage $Y_k$.
In this case the error derivatives of output nodes are:
\begin{equation}
    \frac{\partial \ell}{\partial \widehat{Y}_k}(\widehat{Y},Y) = \widehat{Y}_k - Y_k,
\end{equation}
and the corresponding current $I_k$ to source at output node $\widehat{Y}_k$ in the second phase of training is
\begin{equation}
    I_k = \beta \left( Y_k - \widehat{Y}_k \right).
\end{equation}

\subsubsection{Softmax and Cross-Entropy Loss}

In a classification task, the target vector $Y = (Y_1,Y_2, \ldots, Y_K)$ represents the one-hot code of the class label. In order to apply the cross-entropy loss, we view the vector of output voltages $\widehat{Y} = (\widehat{Y}_1, \widehat{Y}_2, \ldots, \widehat{Y}_K)$ as the vector of \textit{logits}, i.e. the vector of scores assigned to each of the $K$ classes. The vector of class probabilities associated to these logits is $p = (p_1, p_2, \ldots, p_K)$, where 
\begin{equation}
\forall k=1,2,\ldots,K, \qquad p_k = \frac{\exp(\widehat{Y}_k)}{\sum_{i=1}^K \exp(\widehat{Y}_i)}.
\end{equation}
We write for short $p = \textrm{softmax}(\widehat{Y})$ the vector of class probabilities, and $p_k = \textrm{softmax}_k(\widehat{Y})$ the probability of class $k$. The cost function associated to the cross-entropy loss is
\begin{equation}
\ell(\widehat{Y},Y) = - \sum_{k=1}^K Y_k \log(\textrm{softmax}_k(\widehat{Y})).
\end{equation}
In this case the error derivatives of output nodes (logits) are given by:
\begin{equation}
\forall k, \qquad \frac{\partial \ell}{\partial \widehat{Y}_k}(\widehat{Y},Y) = \textrm{softmax}_k(\widehat{Y}) - Y_k,
\end{equation}
so the corresponding current $I_k$ to be sourced at output node $\widehat{Y}_k$ in the second phase of EqProp is
\begin{equation}
I_k = \beta \left( Y_k - p_k \right).
\end{equation}
In practical neuromorphic hardware, the vector of class probabilities $p = (p_1, p_2, \ldots, p_K)$ can be computed digitally, and then the currents $I_k = \beta \left( Y_k - p_k \right)$ can be injected at output nodes with current sources.

\subsubsection{Binary Cross-Entropy}
\label{sec:binary-cross-entropy}

A special important case of the softmax/cross-entropy loss defined above is the binary cross-entropy, which is useful in the setting of GANs (section \ref{sec:gan}).

In this case there is only one output node, i.e. $\widehat{Y}$ is a scalar, which we view as the logit. The probability associated to this logit is the sigmoid, defined as:
\begin{equation}
\sigma(\widehat{Y}) = \frac{1}{1 + \exp(-\widehat{Y})}.
\end{equation}
The binary cross-entropy cost function is then defined as:
\begin{equation}
\ell_{BCE}(\widehat{Y},Y) = - Y \log(\sigma(\widehat{Y})) - (1-Y) \log(1-\sigma(\widehat{Y})).
\end{equation}
The gradient with respect to the unique output node is
\begin{equation}
\frac{\partial \ell_{BCE}}{\partial \widehat{Y}}(\widehat{Y},Y) = \sigma(\widehat{Y}) - Y,
\end{equation}
and the corresponding current injected in the second phase of EqProp is
\begin{equation}
I = \beta \left( Y - \sigma(\widehat{Y}) \right).
\end{equation}

\subsubsection{Weight Decay}

One can easily adapt the update rule of Theorem \ref{thm:gradients} to include a weight regularization term.
Suppose that, instead of the loss of Eq.~\ref{eq:loss-repeat}, we want to optimize a loss of the form
\begin{equation}
    \mathcal{L}(X,Y,\theta) = \ell \left( \widehat{Y},Y \right) + \Omega(\theta),
\end{equation}
where $\Omega(\theta)$ is a weight regularization term. Typically, $\Omega(\theta)$ is the sum of squared weights:
\begin{equation}
    \Omega(\theta) = \frac{\lambda}{2} \sum_{i,j} g_{ij}^2,
\end{equation}
where the $g_{ij}$'s are the conductances of the programmable resistors. In order to compute the gradient of $\mathcal{L}$, the formula of Theorem \ref{thm:gradients} needs to be slightly changed by adding the term $\frac{\partial \Omega}{\partial g_{ij}}$, which here is equal to $\lambda g_{ij}$. We get:
\begin{equation}
    \frac{\partial \mathcal{L}}{\partial g_{ij}} = \lim_{\beta \to 0} \frac{1}{\beta} \left[ \left( \Delta V^\beta_{ij} \right)^2 -  \left( \Delta V^0_{ij} \right)^2 \right] + \lambda g_{ij}.
\end{equation}
Note that the currents injected in the second phase of EqProp are unchanged, i.e. $I_k = - \frac{\partial \ell}{\partial \widehat{Y}_k}$ for each output node $\widehat{Y}_k$.

\subsection{Generative Adversarial Networks}
\label{sec:gan}

In this section, we show how the setting of Generative Adversarial Networks (GANs) \citep{goodfellow2014generative} can be applied to analog neural networks trained with EqProp.

\subsubsection{Description of the GAN setting}

In the setting of GANs, a generative network (or simply `generator', denoted $G$) and a discriminative network (or simply `discriminator', denoted $D$) are trained concurrently. The goal is to have the generator $G$ produce samples (denoted $\widehat{X}$) with similar statistical properties as the data samples (denoted $X$) of a data distribution $p(X)$. To achieve this, the discriminator $D$ is trained to distinguish samples $\widehat{X}$ produced by $G$ from true samples $X$ coming from $p(X)$. Simultaneously, the generator $G$ is trained to maximise the probability of $D$ making a mistake, i.e. $G$ is trained to `fool' the discriminator by producing samples that the discriminator thinks are part of the true data distribution $p(X)$.

More formally, the generator $G$ is parameterized by a set of weights $\theta$ and implements a function $\widehat{X} = G_\theta(Z)$, where $Z$ is a latent variable coming from a known distribution $p(Z)$. The discriminator $D$ is parameterized by a set of weights $\phi$ and implements a function $\widehat{Y} = D_\phi(\widetilde{X})$, where $\widetilde{X}$ is either generated by $G$ or coming from the true $p(X)$, and $\widehat{Y}$ is the probability of $\widetilde{X}$ coming from $p(X)$. The discriminator and the generator are trained to minimize the following two objectives $J_D$ and $J_G$ with respect to $\phi$ and $\theta$, respectively:
\begin{align}
    \label{eq:objective-discriminator}
    J_D(\phi) = & - \mathbb{E}_{X \sim p(X)} \left[ \log(D_\phi(X)) \right] - \mathbb{E}_{Z \sim p(Z)} \left[ \log(1-D_\phi(G_\theta(Z))) \right],\\
    J_G(\theta) = & - \mathbb{E}_{Z \sim p(Z)} \left[  \log(D_\phi(G_\theta(Z))) \right].
    \label{eq:objective-generator}
\end{align}

\subsubsection{Training the Generator}

In the generator $G$, a set of nodes are the `latent nodes' $Z$ whose voltage values are sampled from $p(Z)$ and sourced. Another set of nodes are the `data nodes' $\widehat{X}$ which correspond to the generated sample.

The objective function for the generator (Eq.~\ref{eq:objective-generator}) rewrites $J_G(\theta) = \mathbb{E}_{Z \sim p(Z)} \left[ \mathcal{L}_G(Z,\theta) \right]$, with
\begin{align}
    \mathcal{L}_G(Z,\theta) & = \ell_G(G_\theta(Z)), \\
    \ell_G(\widehat{X}) & =  - \log(D_\phi(\widehat{X})).
    \label{eq:ell_G}
\end{align}
Having defined the loss $\mathcal{L}_G$ and the cost function $\ell_G$ corresponding to the generator $G$, we can then use EqProp to train $G$ on a given latent sample $Z \sim p(Z)$, provided that for each data node $\widehat{X}_i$ of $G$ we can compute the current $I_i$ to be injected at $\widehat{X}_i$ in the second phase (nudged phase). These currents to be sourced in the nudged phase are equal to:
\begin{equation}
    I_i = - \beta \; \frac{\partial \ell_G}{\partial \widehat{X}_i}(\widehat{X}).
    \label{eq:currents}
\end{equation}
Thus we need to compute $\frac{\partial \ell_G}{\partial \widehat{X}_i}(\widehat{X})$. To do this, let us rewrite the function $\ell_G(\widehat{X})$ of Eq.~\ref{eq:ell_G} as
\begin{equation}
\label{eq:discrimin}
\ell_G(\widehat{X}) = \ell_{BCE}(D_\phi(\widehat{X}),1),
\end{equation}
where $\ell_{BCE}$ is the binary cross-entropy\footnote{For more readability we have chosen here to denote $\widehat{Y} = D_\phi(\widehat{X})$ as the \textit{probability} of $D$ classifying $\widehat{X}$ as true data, to be consistent with the notations used in the GAN literature. However in an analog network, $\widehat{Y} = D_\phi(\widehat{X})$ would actually be the \textit{logit}, and the probability would be $\sigma(\widehat{Y})$ where $\sigma$ is the sigmoid function.} cost function presented in section \ref{sec:binary-cross-entropy}, with target $Y=1$ (the generator $G$ is trained to trick $D$ to classify the sample $\widehat{X}$ as a true data sample).
We see now from Eq.~\ref{eq:discrimin} that $\frac{\partial \ell_G}{\partial \widehat{X}_i}(\widehat{X})$ represents the gradient with respect to the input nodes of the discriminator $D$. This can be achieved using EqProp (again) in the discriminator $D$ this time, using the following Lemma.

\begin{lma}
\label{lma:inputs}
Consider an input node of the discriminator $D$. Let $\widehat{X}_i$ be the voltage sourced at this input node and $I_i$ the current flowing through it.
Denote $I_i^0$ and $I_i^\beta$ the currents in the first phase (free phase) and second phase (nudged phase) of EqProp, respectively.
Then, the gradient of $\ell_G(\widehat{X})$ with respect to $\widehat{X}_i$ is equal, in the limit $\beta \to 0$, to
\begin{equation}
\frac{\partial \ell_G}{\partial \widehat{X}_i}(\widehat{X}) =
\lim_{\beta \to 0} \frac{1}{\beta} \left( I_i^\beta - I_i^0 \right).
\end{equation}
\end{lma}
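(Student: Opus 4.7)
The plan is to reduce Lemma~\ref{lma:inputs} to the same variational machinery used to prove Theorem~\ref{lma:gradients}, by observing that a sourced input voltage of the discriminator plays formally the same role as an adjustable device parameter in the total pseudo-power: both appear as external variables held fixed during the minimization of $\mathcal{P}$ over the floating-node voltages. The first step is to identify the current $I_i$ flowing through the input voltage source at node $i$ with the partial derivative of the discriminator's total pseudo-power $\mathcal{P}_D(\widehat{X},V,\phi)$ with respect to $\widehat{X}_i$. Writing $\mathcal{P}_D=\sum_{j<k} p_{jk}(V_j-V_k)$ and differentiating only the terms incident to node $i$ yields
\begin{equation}
\frac{\partial \mathcal{P}_D}{\partial \widehat{X}_i}(\widehat{X},V,\phi) \;=\; \sum_j \gamma_{ij}(\widehat{X}_i-V_j) \;=\; I_i,
\end{equation}
the sum of currents leaving node $i$ along every incident branch being, by Kirchhoff's current law, exactly the current that the voltage source must supply.

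Next, I would define the nudged pseudo-power for the discriminator $\mathscr{P}_D(\widehat{X},\beta,V,\phi) = \mathcal{P}_D(\widehat{X},V,\phi) + \beta\,\ell_{BCE}(V_{\widehat{Y}},1)$, exactly as in the proof of Theorem~\ref{lma:gradients}, so that $V^0$ and $V^\beta$ are critical points of $\mathscr{P}_D(\widehat{X},0,\cdot,\phi)$ and $\mathscr{P}_D(\widehat{X},\beta,\cdot,\phi)$ respectively by Lemma~\ref{lma:power}. I then apply Lemma~\ref{lma:aux} with the input voltage $\widehat{X}_i$ playing the role of $\theta$, which gives
\begin{equation}
\frac{d}{d\widehat{X}_i}\frac{\partial \mathscr{P}_D}{\partial \beta}\bigl(\widehat{X},\beta,V^\beta\bigr) \;=\; \frac{d}{d\beta}\frac{\partial \mathscr{P}_D}{\partial \widehat{X}_i}\bigl(\widehat{X},\beta,V^\beta\bigr).
\end{equation}
Evaluating at $\beta=0$ concludes the proof: the left-hand side equals $\tfrac{d}{d\widehat{X}_i}\ell_{BCE}(V^0_{\widehat{Y}},1) = \partial \ell_G/\partial \widehat{X}_i$ by definition of $\ell_G$, and the right-hand side equals $\lim_{\beta\to 0}\beta^{-1}(I_i^\beta - I_i^0)$ once we use that $\partial \mathscr{P}_D/\partial \widehat{X}_i = \partial \mathcal{P}_D/\partial \widehat{X}_i = I_i$, since the nudging term $\beta\,\ell_{BCE}$ is independent of the sourced input voltage.

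The main obstacle is conceptual rather than computational: recognizing that the EqProp gradient formula is not specific to tunable device parameters but applies to \emph{any} external variable of the total pseudo-power, including a sourced input voltage, and correctly identifying the conjugate variable of $\widehat{X}_i$ as the input current $I_i$ via Kirchhoff's law. Beyond this, one must rely on the same implicit smoothness assumptions as in the proof of Theorem~\ref{lma:gradients} -- namely that the steady state depends differentiably on $(\widehat{X}_i,\beta)$ in a neighborhood of the operating point -- so that the argument of Lemma~\ref{lma:aux}, which only invokes the equality of mixed second partials of $H(\theta,\beta)=\mathscr{P}(\theta,\beta,V_\theta^\beta)$ together with the criticality of $V_\theta^\beta$, transfers verbatim to this setting.
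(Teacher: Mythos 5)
Your proposal is correct and takes essentially the same route as the paper: the paper's proof likewise identifies the input current as the conjugate variable $I_i = \frac{\partial \mathcal{P}}{\partial \widehat{X}_i}$, rewrites the claim as the gradient formula of Eq.~\ref{eq:to-prove2} with the sourced input voltages playing the role of the parameter $\theta$, and then invokes the argument via Lemma~\ref{lma:power} and Lemma~\ref{lma:aux} verbatim. You simply make explicit the substitution that the paper compresses into the remark that the proof is ``identical to the proof of the formula of Eq.~\ref{eq:to-prove2}.''
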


\begin{proof}
Let $\mathcal{P} \left( X,V,\phi \right)$ denote the total pseudo-power of the discriminator $D$. Recall the form of the total pseudo-power (Eq.~\ref{eq:total-pseudo-power}) and note that the current flowing through input node $X_i$ of the discriminator $D$ is
\begin{equation}
I_i = \frac{\partial \mathcal{P}}{\partial X_i}\left( X,V,\phi \right).
\end{equation}
Now we can rewrite the formula to be proved in the form:
\begin{equation}
\frac{\partial \mathcal{L}_D}{\partial X_i}(X)
= \lim_{\beta \to 0} \frac{1}{\beta} \left( \frac{\partial \mathcal{P}}{\partial X_i}\left( X,V^\beta,\phi \right) -  \frac{\partial \mathcal{P}}{\partial X_i}\left( X,V^0,\phi \right) \right),
\end{equation}
where we recall that $\ell_G = \mathcal{L}_D$.
Since this formula must be shown for all input nodes $X_i$, we can rewrite the formula to be proved more compactly as:
\begin{equation}
\frac{\partial \mathcal{L}_D}{\partial X}(X)
= \lim_{\beta \to 0} \frac{1}{\beta} \left( \frac{\partial \mathcal{P}}{\partial X}\left( X,V^\beta,\phi \right) -  \frac{\partial \mathcal{P}}{\partial X}\left( X,V^0,\phi \right) \right).
\end{equation}
The proof of this formula is identical to the proof of the formula of Eq.~\ref{eq:to-prove2}.
\end{proof}

Finally, combining Eq.~\ref{eq:currents} with Lemma \ref{lma:inputs} we see that for each data node $\widehat{X}_i$ of $G$, the required current $I_i$ to be injected at $\widehat{X}_i$ in the second phase (nudged phase) is equal to:
\begin{equation}
    I_i = I_i^\beta - I_i^0.
\end{equation}
Altogether, this gives us the following procedure to optimize the generator $G$ by SGD.

\paragraph{Equilibrium Propagation to Train the Generator of a GAN.}
In the first phase (free phase), sample $Z \sim p(Z)$, source the corresponding voltages at the $Z$ nodes of the generator $G$ and measure the voltages of all floating nodes in $G$. Then use the voltages of the $\widehat{X}$ nodes in $G$ as voltage sources for the input nodes of the discriminator $D$, and measure the currents $I_i^0$ at input nodes of $D$. This constitutes the first phase of EqProp.

In the second phase (nudged phase), source a current $I = - \beta \frac{\partial \ell_{BCE}}{\partial \widehat{Y}}$ at the output node $\widehat{Y}$ of $D$ and measure the currents $I_i^\beta$ at input nodes of $D$ anew. For each $i$, compute the difference $I_i = I_i^\beta - I_i^0$ and inject this current $I_i$ at the date node $\widehat{X}_i$ of $G$. Measure the voltages of floating nodes in $G$ anew. This constitutes the second phase of EqProp.

Finally, the gradient of $\mathcal{L}_G(Z,\theta)$ with respect to the conductances of the generator $G$ can be estimated using the formula of Theorem \ref{thm:gradients}. Specifically, let $\Delta V_{ij}$ denote the voltage drop across a programmable resistor of conductance $g_{ij}$ in the generator $G$. Denoting $\Delta V^0_{ij}$ and $\Delta V^\beta_{ij}$ the voltage drops in the first phase (free phase) and second phase (nudged phase) of EqProp, respectively, the gradient with respect to $g_{ij}$ is
\begin{equation}
    \frac{\partial \mathcal{L}_G}{\partial g_{ij}} = \lim_{\beta \to 0} \frac{1}{\beta} \left[ \left( \Delta V^\beta_{ij} \right)^2 -  \left( \Delta V^0_{ij} \right)^2 \right].
\end{equation}

\subsubsection{Training the Discriminator}

In the discriminator $D$, a set of nodes are the `input nodes' $X$ whose voltages are sourced (and correspond to either real data coming from $p(X)$ or fake data generated by $G$), and a set of nodes are the `output nodes' $\widehat{Y}$ which correspond to the probability of $X$ coming from $p(X)$.

The objective to be optimized (Eq.~\ref{eq:objective-discriminator}) can be written in the form
\begin{equation}
    J_D(\phi) =  \mathbb{E}_{X \sim p(X)} \left[ \mathcal{L}_D^1(X,\phi) \right] + \mathbb{E}_{Z \sim p(Z)} \left[ \mathcal{L}_D^2(Z,\phi) \right],
\end{equation}
where $\mathcal{L}_D^1$ and $\mathcal{L}_D^2$ are per-sample losses defined by
\begin{align}
    \mathcal{L}_D^1(X,\phi) & = \ell_{BCE}(D_\phi(X), 1), \\
    \mathcal{L}_D^2(Z,\phi) & = \ell_{BCE}(D_\phi(G_\theta(Z)), 0).
\end{align}
The function $\ell_{BCE}$ is the binary cross-entropy cost function of Section \ref{sec:binary-cross-entropy}. Computing the gradients of $\mathcal{L}_D^1(X,\phi)$ and $\mathcal{L}_D^2(Z,\phi)$ with EqProp can be done as in the supervised setting.

\paragraph{Remark.}
Note that another possibility is to have an analog generator $G$ (trained with EqProp) combined with a software-based discriminator $D$ (e.g. a multilayer perceptron) trained on conventional computer hardware with backpropagation. In this case, the currents of Eq.~\ref{eq:currents} can also be computed in software with backpropagation.

\clearpage
\section{Architecture Details (Complement of Section \ref{sec:model})}
\label{sec:model-details}

In this appendix, we provide details about the analog neural network architecture introduced in section \ref{sec:model}.

First we describe the linear resistance network model (Section \ref{sec:linear-resistance-network}). Although this model is not useful in practice, studying it is helpful to gain understanding of the working mechanisms of analog neural networks. It helps understand the limits of linear devices and the need to introduce nonlinear devices such as diodes (Section \ref{sec:diodes-details}). The linear resistance network model also helps understand the vanishing signal effect observed in our simulations (Section \ref{sec:vanishing-signal-effect}) and thus the need to introduce amplifiers (Section \ref{sec:amplifiers-details}). Another difference between conventional neural networks and analog neural networks is the fact that the weights (conductances) are all positive (Section \ref{sec:nonnegative-weights}). This structural constraint of analog networks requires practices that are uncommon in deep learning, such as doubling the number of input nodes and output nodes in the network.

\subsection{Linear Resistance Network: Insights and Limitations}
\label{sec:linear-resistance-network}

A linear resistance network is an electrical circuit whose nodes are linked pairwise by linear resistors. Linear resistance networks are extensively studied in the literature -- see e.g. \citet{baez2015compositional}.

Recall that the current-voltage relationship of a linear resistor with end node voltages $V_i$ and $V_j$ and with conductance $g_{ij}$ is given by Ohm's law: $I_{ij} = g_{ij} (V_i-V_j)$, where $I_{ij}$ is the current through the resistor. By Kirchhoff's current law, for each floating node $i$ in a linear resistance network, we have the equation: $\sum_j g_{ij} (V_i-V_j) = 0$. This rewrites:
\begin{equation}
    \label{eq:stationary-regime}
    V_i = \frac{\sum_j g_{ij} V_j}{\sum_j g_{ij}}.
\end{equation}
This operation resembles the usual multiply-accumulate operation of artificial neurons in conventional deep learning, with two notable differences: first, an additional factor $G_i = \sum_j g_{ij}$ referred to as the \textit{G term} normalizes the weighted sum of voltages; second, there is no nonlinear activation function.

Thus, in a linear resistance network, each floating node voltage $V_i$ is a weighted mean of its adjacent node voltages. Nonlinear components (such as diodes) are necessary to perform nonlinear operations.

\subsection{Current-Voltage Transfer Function of a Diode}
\label{sec:diodes-details}

The transfer function of a diode takes the following form. The current $I$ across a diode as a function of its voltage drop $\Delta V$ is given by
\begin{equation}
    I = I_S \left[ \exp \left( \frac{\Delta V}{\eta V_T} \right) - 1 \right],
\end{equation}
where $I_S$ is the reverse saturation current ($I_S \sim 1 \mu\textrm{A}$), $V_T$ is the thermal voltage ($V_T \sim 25.85 \; \textrm{mV}$ at 300 K) and $\eta$ is the emission coefficient of the diode ($\eta = 2$ for a silicon diode).

The voltage sources in series with the diodes, used to shift the bounds of the activation function, are set to $0.3$ V and $-0.7$ V.

\subsection{Vanishing Signals Effect}
\label{sec:vanishing-signal-effect}

Simulations show that in a circuit composed only of resistors and diodes (e.g. the network architecture of Figure \ref{fig:network} without the amplifiers), the signal amplitudes (i.e. node voltages) decay when going from the layer of input nodes to the first layer of hidden nodes. In the case of a linear resistance network, this \textit{vanishing signals effect} can be explained by the formula of Eq.~\ref{eq:stationary-regime}. This formula shows that each floating node voltage is a weighted mean of its adjacent node voltages, a consequence of which is that the extremal voltage values must be reached at input nodes (i.e. nodes whose voltages are sourced). Another way to see it is that the current through resistive devices always flows from high electric potential to low electric potential.

\subsection{Bidirectional Amplifier}
\label{sec:amplifiers-details}

To overcome the vanishing signal effect, we use amplifiers. Specifically, we design a bidirectional amplifier which amplifies voltages in the forward direction by a gain factor $A$, and amplifies currents in the backward direction by a gain factor $1/A$, where $A$ is a hyperparameter to be chosen. This setup is represented in Figure \ref{fig:amplifier} and can be described by the following equations:
\begin{align}
    V_2 & = A V_1, \\
    I_2 & = A I_1.
\end{align}
Note that if the gain is set to $A=1$, then the bidirectional amplifier behaves as if it were a short circuit, not influencing the steady state of the network.

Voltage amplification is performed in the forward direction using a voltage-controlled voltage source (VCVS). Specifically, the VCVS amplifies the voltage at the input of the bidirectional amplifier ($V_1$ in Fig.~\ref{fig:amplifier}) by a factor of e.g. $A=4$, resulting in an output voltage ($V_2$ in Fig.~\ref{fig:amplifier}) of $4x$ the input voltage. Current amplification is performed in the backward direction by a current-controlled current source (CCCS). Specifically, the CCCS senses the current through the VCVS and reflects it backward at the input of the bidirectional amplifier, with a gain of $1/A = 1/4$. This setup gives the bidirectional amplifier we require to allow signals to propagate bidirectionally.

\begin{figure}
\centering
\includegraphics[width=0.5\textwidth]{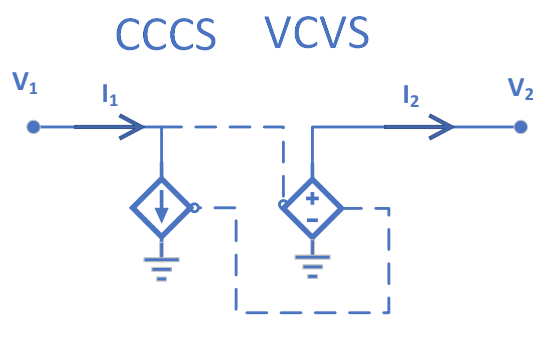}
\caption{
\textbf{Bidirectional amplifier,} composed of a voltage-controlled voltage source (VCVS) and a current-controlled current source (CCCS). The output voltage ($V_2$) is related to the input voltage ($V_1$) by the relationship $V_2 = A V_1$, where $A$ is a gain factor. The input current ($I_1$) is related to the output current ($I_2$) by the relationship $I_1 = \frac{1}{A} I_2$. In the control branches (represented by dashed lines), the current is zero.
}
\label{fig:amplifier}
\end{figure}

\subsection{Constraint of Positive Weights}
\label{sec:nonnegative-weights}

The weights in analog neural networks are represented by conductances, which are positive. This contrasts with conventional neural networks which are not subject to such constraints and whose weights are free to take either positive or negative values.

There are several ways of dealing with the constraint of positive conductances. One approach to allow weights to be either positive or negative is to decompose each weight as the difference of two (positive) conductances \citep{wang2019reinforcement}. Another approach is to shift the mean of the matrix by a constant factor as described in Section 4.1. of \citet{hu2016dot}. In this work, we choose a third approach, which consists in doubling the number of input and output nodes.

\subsection{Analog Neural Networks vs Conventional Neural Networks}

The computations in analog neural networks are carried out in ways that are fundamentally different than in conventional deep learning.

To illustrate this, we consider the setting of Section \ref{sec:model}, in which the circuit is composed of input nodes (denoted $X$), internal nodes, and output nodes (denoted $\widehat{Y}$). The architecture and the components of the circuit (programmable resistors, diodes, amplifiers, etc.) determine the $X \mapsto \widehat{Y}$ function. In particular, the conductances of the programmable resistors (denoted $\theta$) parameterize this function, which we can write in the form $\widehat{Y} = f(X,\theta)$.

In conventional deep learning, the function $\widehat{Y} = f(X,\theta)$ implemented by a neural network is \textit{explicitly} implemented as a composition of elementary operations (such as tensor multiplications and nonlinear activation functions). In contrast, one way to think of an analog neural network, is that its state is described by its node voltages, and that the function $\widehat{Y} = f(X,\theta)$ is \textit{implicitly} determined by Kirchhoff's current law.

\subsubsection{Network Architecture vs Hardware Architecture}

Computer architectures for conventional deep learning involve several layers of abstractions which separate the neural network implementation (at a software level) from the underlying device physics (at a hardware level). This enables deep learning practitioners to design their neural networks without necessarily understanding the hardware design, and vice-versa. This has led software engineers and hardware engineers to adopt somewhat different languages. In particular, the term `architecture' is used both in hardware design and neural network design, but with very different meanings.

In contrast, by implementing the neural network directly at the hardware level, our framework for end-to-end analog neural networks eliminates all intermediate abstractions between the hardware and the neural network and thus blurs the distinction between `hardware architecture' and `network architecture'. This `architecture' implements and determines the $\widehat{Y} = f(X,\theta)$ input-to-output function of the circuit.

\clearpage
\section{SPICE Simulation Details (Complement of Section \ref{sec:numerical-simulations})}
\label{sec:simulation-details}

In this appendix, we provide details about our numerical simulations with SPICE.
First we describe the general training procedure (section \ref{sec:netlist-details}), and then we describe the specific details related to solving the XOR task (section \ref{sec:xor-details}) and the MNIST task (section \ref{sec:mnist-details}).

\subsection{Simulation Setup}
\label{sec:netlist-details}

In our simulations, we use SPICE to perform the first phase (free phase) as well as the second phase (nudged phase) of EqProp. The other operations are performed in Python: this includes weight initialization and netlist generation (before training starts), data normalization (before the first phase of EqProp), calculating loss and gradient currents (between the first and second phases of EqProp), weight gradient calculation (at the end of the second phase of EqProp) and performing the weight updates (we update resistances in software).

\paragraph{Weight Initialization and Netlist Generation.}

In SPICE, a circuit is defined by a \textit{netlist}, which is a text-based representation of the circuit. 
For an analog network with $n_\text{in}$ input nodes, one hidden layer of $n_\text{hidden}$ neurons, and $n_\text{out}$ output nodes, the corresponding netlist is created by sequentially defining and linking the following components:

\begin{enumerate}
  \itemsep0em
  \item $n_\text{in}$ input voltage sources initialized at ground,
  \item $n_\text{in}$ nodes representing input units,
  \item a resistance matrix $\textbf{R}_1 \in \mathbb{R}^{n_\text{in} \times n_\text{hidden}}$,
  \item $n_\text{hidden}$ nodes representing the input nodes of hidden neurons,
  \item $n_\text{hidden}$ diodes, where each anode is connected to a hidden node and each cathode is initialized at ground,
  \item $n_\text{hidden}$ diodes, where each anode is initialized at ground and each cathode is connected to a hidden node,
  \item $n_\text{hidden}$ CCCS current amplifiers,
  \item $n_\text{hidden}$ VCVS current amplifiers,
  \item $n_\text{hidden}$ nodes representing the output nodes of hidden neurons (after nonlinear transfer function and amplification),
  \item a resistance matrix $\textbf{R}_2 \in \mathbb{R}^{n_\text{hidden} \times n_\text{out}}$,
  \item $n_\text{out}$ nodes representing output units,
  \item $n_\text{out}$ output current sources initialized at ground for current injection during the weakly clamped phase.
\end{enumerate}

The conductances (weights) are initialized by drawing i.i.d. samples uniformly at random in a range of values $[L,U]$. The values of $L$ and $U$ used for XOR and MNIST are provided in sections \ref{sec:xor-details} and \ref{sec:mnist-details} respectively. We note that SPICE does not support conductances, only resistances; therefore the weights of the network are stored as resistances ($\textbf{R}_1$ and $\textbf{R}_2$ here).

Steps 5 and 6 form the nonlinear transfer function and steps 7 and 8 form the signal amplification procedure. Steps 3 through 9 may be repeated to create deeper networks (assuming appropriately sized resistance matrices).

\paragraph{Training Iteration.}

Given a data set $\mathcal{D}$ of training samples $(X,Y)$, we optimize the network by stochastic gradient descent (SGD).
In order to perform one training iteration on a sample $(X,Y)$ (i.e. to compute the corresponding gradient and to take one step of SGD), we perform the following four steps below.

\subparagraph{Free Phase (Inference).}
We set the DC voltage of each input node $i \in \{ 1,2,\ldots,n_\text{in} \}$ to $X_i$ and set the current through each output node to zero. We then run the SPICE simulation under standard temperature conditions (25$^{\circ}$C). This calculates the (first) steady state of the network. We then extract the voltages 
at each hidden neuron (both before and after the transfer function) and output node. We denote these voltages $H_\text{in}^0$, $H_\text{out}^0$ and $\widehat{Y}^0$, respectively. Recall that, to address the constraint of nonnegative weights, the number of output nodes are doubled, so that $\widehat{Y}^0 = \{ \widehat{Y}_k^{+,0}, \widehat{Y}_k^{-,0} \}_{1 \leq k \leq K}$, with $\widehat{Y}_k^{+,0} - \widehat{Y}_k^{-,0}$ serving as prediction for class $k$ ($1 \leq k \leq K$).

\subparagraph{Loss Calculation and Gradient Currents.}
We then calculate in software the squared error loss:
\begin{equation}
    \mathcal{L} = \sum_k \left( Y_k - \widehat{Y}_k^{+,0}+\widehat{Y}_k^{-,0} \right)^2,
\end{equation}
as well as the gradient currents $I_k^+$ and $I_k^-$ to source at output nodes $\widehat{Y}_k^+$ and $\widehat{Y}_k^-$:
\begin{equation}
    I_k^+ = \beta \; (Y_k-\widehat{Y}_k^{+,0}+\widehat{Y}_k^{-,0}), \qquad I_k^- = \beta \; (\widehat{Y}_k^{+,0}-\widehat{Y}_k^{-,0}-Y_k),
\end{equation}
where $\beta$ is a hyperparameter that has the physical dimensions of a conductance.

\subparagraph{Nudged Phase.}
We set the current at each output nodes $\widehat{Y}_k^+$ and $\widehat{Y}_k^-$ to $I_k^+$ and $I_k^-$, respectively. We then run the simulation under the same conditions as in the free phase. This simulation produces the second steady state of the network. We extract anew the voltages at hidden and output nodes, denoted $H_\text{in}^\beta$, $H_\text{out}^\beta$ and $\widehat{Y}^\beta$, respectively.

\subparagraph{Weight Updates.}
Using the listed voltages, we calculate the voltage drops in software. For the first layer, we calculate $\Delta V_{1}^{0} = X \ominus H_\text{in}^{0}$ and $\Delta V_{1}^{\beta} = X \ominus H_\text{in}^{\beta}$, where $\ominus$ denotes the outer subtraction operator. Similarly for the second layer, we calculate $\Delta V_{2}^{0} = H_\text{out}^{0} \ominus \widehat{Y}^{0}$ and $\Delta V_{2}^{\beta} = H_\text{out}^{\beta} \ominus \widehat{Y}^{\beta}$. We then read the resistances from the netlist, convert them to conductances which we gather into matrices of conductances $\textbf{G}_1$ and $\textbf{G}_2$. We then update these matrices according to Theorem \ref{thm:gradients}:
\begin{align}
\label{eq:simulated-update1}
\textbf{G}_1 & \leftarrow \textbf{G}_1 - \frac{\alpha_1}{\beta} \left [ \left ( \Delta V_{1}^{\beta} \right )^2 - \left (\Delta V_{1}^{0} \right)^2 \right ], \\
\label{eq:simulated-update2}
\textbf{G}_2 & \leftarrow \textbf{G}_2 - \frac{\alpha_2}{\beta} \left [ \left ( \Delta V_{2}^{\beta} \right )^2 - \left (\Delta V_{2}^{0} \right)^2 \right ],
\end{align}
where $\alpha_1$ and $\alpha_2$ are learning rates for the first and second layers, respectively. All conductances below a threshold level $L$ are then clipped to $L$, to account for the fact that real-world conductances are strictly positive. We then convert these conductance matrices ($\textbf{G}_1$ and $\textbf{G}_2$) back to resistance matrices ($\textbf{R}_1$ and $\textbf{R}_2$), and we update the netlist accordingly.


\paragraph{Mini-batch Updates.} Updating the resistance matrices in software with SPICE is time consuming, as this framework was not designed for such purpose. To limit the time wasted in these updates, we perform mini-batch gradient descent with mini-batches of size $m = 100$, i.e. each weight update corresponds to the sum of the gradients of $m$ samples. However, contrary to conventional mini-batch processing in deep learning (where all $m$ gradients would be computed in parallel), in our setting the gradients are computed one at a time. Indeed, data samples must be processed one at a time in our analog neural network. For each sample within a mini-batch, we sequentially perform the first phase (free phase) and second phase (nudged phase) to compute the corresponding gradients, sans weight update. The gradients are stored\footnote{We note that it is possible to perform approximate mini-batch updates on analog crossbar arrays without storing the gradients \citep{hoskins2019sbe}. In this case the mini-batch update corresponds to a rank-1 approximation of the gradient for a given mini-batch.}.
After processing all samples of the mini-batch, we perform the weight updates corresponding to the sum of the $m$ gradients. The weight updates of Eq.~\ref{eq:simulated-update1}-\ref{eq:simulated-update2} adapted for mini-batch updates are:
\begin{align}
\textbf{G}_1 & \leftarrow \textbf{G}_1 - \frac{\alpha_1}{m \beta} \sum_{i=1}^{m}{ \left [ \left ( \Delta {V_{i,1}^{\beta}} \right )^2 - \left (\Delta {V_{i,1}^{0}} \right)^2 \right ]}, \\
\textbf{G}_2 & \leftarrow \textbf{G}_2 - \frac{\alpha_2}{m \beta} \sum_{i=1}^{m}{ \left [ \left ( \Delta {V_{i,2}^{\beta}} \right )^2 - \left (\Delta {V_{i,2}^{0}} \right)^2 \right ]},
\end{align}
where the index $i$ refers to the data sample $i$.

\subsection{XOR Task}
\label{sec:xor-details}

The XOR task consists in training a function $\widehat{Y} = f(X_1,X_2,\theta)$ to produce $\widehat{Y} = X_1 \; \textrm{XOR} \; X_2$. The corresponding inputs-target pairs are given by the following table.

\begin{center}
 \begin{tabular}{||c c||} 
 \hline
 $X$ & $Y$ \\ [0.5ex] 
 \hline\hline
 $\langle 0, 0 \rangle$ & $\langle 0 \rangle$ \\ 
 \hline
 $\langle 0, 1 \rangle$ & $\langle 1 \rangle$ \\
 \hline
 $\langle 1, 0 \rangle$ & $\langle 1 \rangle$ \\
 \hline
 $\langle 1, 1 \rangle$ & $\langle 0 \rangle$ \\ [1ex] 
 \hline
\end{tabular}
\end{center}

We find it necessary to shift and scale the input values. Instead of using the values $0$ and $1$, we use $-2$ and $2$. Hence, our training dataset is as follows:

\begin{center}
 \begin{tabular}{||c c||} 
 \hline
 $X$ & $Y$ \\ [0.5ex] 
 \hline\hline
 $\langle -2, -2 \rangle$ & $\langle 0 \rangle$ \\ 
 \hline
 $\langle -2, 2 \rangle$ & $\langle 1 \rangle$ \\
 \hline
 $\langle 2, -2 \rangle$ & $\langle 1 \rangle$ \\
 \hline
 $\langle 2, 2 \rangle$ & $\langle 0 \rangle$ \\ [1ex] 
 \hline
\end{tabular}
\end{center}

The network architecture used is represented in Figure.~\ref{fig:xor}.

The conductances (weights) are initialized by drawing i.i.d. samples uniformly at random in the range $[L, U]$ with $L = 0.0001$ S and $U=0.1$ S. In the second phase (nudged phase), we scale the gradient currents by a factor $\beta = 0.001$. We use $\alpha = 0.001$ as the learning rate (for all conductances).
During the weight update phase, all conductances that fall below the threshold level $L=10^{-7}$ S are clipped to $10^{-7}$ S to account for the fact that real-world conductances are strictly positive.
We train the network for 1000 iterations.

This training procedure produces a network that is able to perform the XOR operation. See Fig.~\ref{fig:xor} for the final network after completion of training. Since this analog network is very small and only requires 1000 training iterations, the simulation can run in under a minute on a standard computer.

\begin{figure*}[ht!]
\begin{center}
\includegraphics[width=\textwidth]{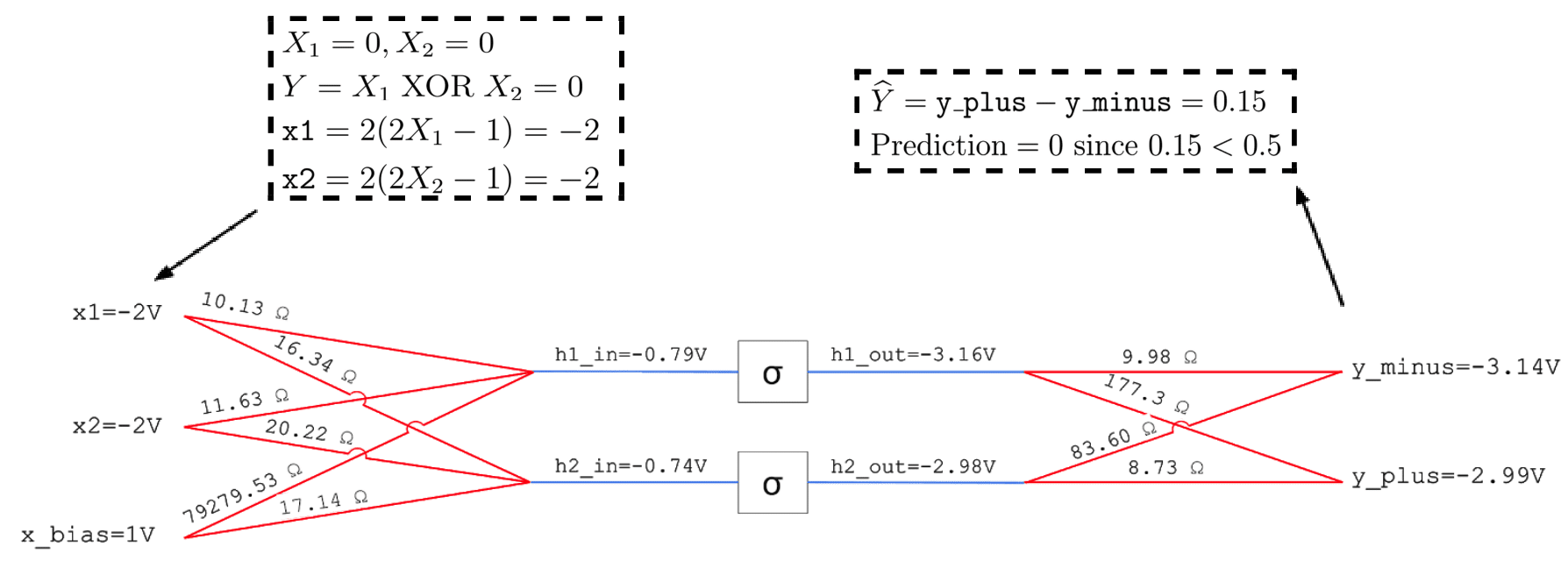}
\end{center}
\caption{SPICE network used to solve the XOR task, together with the final weights after training. The network has two input nodes ($X_1$ and $X_2$), plus a node whose voltage is always set to the same value of $X_{\rm bias} = 1 V$ which serves as a bias for the two hidden neurons. The symbol $\sigma$ denotes the antiparallel diodes and the bidirectional amplifier which implement the nonlinear transfer function (as in Figure~\ref{fig:network}). The gains of the amplifiers are set to $A = 4$. To overcome the constraint of non-negative weights (non-negative conductances), our network has two output nodes $\widehat{Y}_+$ and $\widehat{Y}_-$, with the prediction being $\widehat{Y} = \widehat{Y}_+ - \widehat{Y}_-$. Also, note that SPICE does not support conductances; therefore the weights are represented as resistances.}
\label{fig:xor}
\end{figure*}

\subsection{MNIST Classification Task}
\label{sec:mnist-details}

We now give details about our simulations on the MNIST dataset (the `modified' version of the National Institute of Standards and Technology dataset) \citep{lecun1998gradient}.
The MNIST dataset consists of 32x32 gray-scaled images representing digits, together with their class label (a digit between 0 and 9). It is composed of 50,000 training samples and 10,000 test samples.

To overcome the constraint of non-negative weights (non-negative conductances), we double the number of output nodes. We also double the number of input nodes and invert one set. Thus, our network has $1568$ input nodes (two nodes per pixel of a $28 \times 28$ image), one hidden layer of $100$ neurons, and $20$ output nodes (two nodes per each of the $10$ digit classes). We also use a `node' $X_{\rm bias}$ whose voltage is always set to the same value of $1 V$, which serves as a bias for the hidden neurons.
This gives us $X_1$, $-X_1$, $X_2$, $-X_2$, ..., $X_{784}$, $-X_{784}$ and $X_{\rm bias}$ as input nodes, $H_1$, $H_2$, ..., $H_{100}$ as hidden neurons, and $\widehat{Y}_0^+$, $\widehat{Y}_0^-$, $\widehat{Y}_1^+$, $\widehat{Y}_1^-$, ..., $\widehat{Y}_{9}^+$, $\widehat{Y}_{9}^-$ as output nodes. The prediction of the model is by definition
\begin{equation}
    Y_{\rm pred} = \underset{0 \leq i \leq 9}{\arg \max} \left( \widehat{Y}_i^+ - \widehat{Y}_i^- \right).
\end{equation}

For each weight matrix, the conductances are initialized by drawing i.i.d. samples uniformly at random in the range $[L, U]$ with $L=10^{-7}$ and $U = \frac{0.08}{\sqrt{n_i + n_{i+1}}}$, where $n_i$ is the fan-in and $n_{i+1}$ is the fan-out of the weight matrix\footnote{We found that the Glorot initialization scheme \citep{glorot2010understanding} was not optimal and that using the scaling factor 0.08 in the upper bound $U$ yields better results. Future work should investigate better initialization schemes for analog neural networks.}. The gains of the amplifiers are set to $A = 4$. At each training iteration, we normalize each image sample to have mean $0$ and standard deviation $5$. In the second phase (nudged phase), we scale the gradient currents by a factor $\beta$ with $|\beta| = 0.01$. We choose the sign of $\beta$ at random for each training iteration\footnote{This technique, which is also used by \citet{Scellier+Bengio-frontiers2017} and \citet{ernoult2020equilibrium}, helps improve the test accuracy.}. We use $\alpha_1 = 0.1$, and $\alpha_2 = 0.05$ as learning rates for the first and second weight matrices.
During the weight update phase, all conductances that fall below the threshold level $L=10^{-7}$ S are clipped to $10^{-7}$ S to account for the fact that real-world conductances are strictly positive.
Since writing and loading new resistances for each training iteration is costly (in software simulations), we save simulation time by performing the weight updates every $100$ iterations, by storing intermediate cumulative gradients. The resulting weight updates are thus equivalent to those of mini-batch gradient descent with minibatches of size $m=100$ (although data samples are processed one at a time).

\subsubsection{Logistic Regression Classifier (Benchmark)}

To demonstrate that our analog neural network (SPICE-based network) benefits from the nonlinearity of its devices, we show that it outperforms a logistic regression model. The logistic regression model is a linear transformation of the inputs, followed by a softmax and the cross-entropy loss. For a fair comparison with our SPICE-based network, we double the number of input nodes (we use two nodes per pixel and invert one set), and each image sample is normalized to have mean $0$ and standard deviation $1$. Thus the logistic regression model has $2 \times 784 = 1568$ input units and $10$ output units. The test error rate obtained is $7.27\%$, which is significantly higher than the test error rate of our SPICE implementation ($3.43\%$). \citet{lecun1998gradient} also report results with different kinds of linear classifiers (corresponding to different pre-processing methods), all performing significantly worse than our SPICE model.

\subsubsection{PyTorch Implementation of EqProp (Benchmark)}

We benchmark our SPICE-implementation of EqProp against a PyTorch implementation of the original EqProp model \citep{Scellier+Bengio-frontiers2017}. We use two models for benchmarking: a \textit{standard model} in which the weights are free to be either positive or negative, and a \textit{positive model} with positive weights. For the positive model, we double the input units (and invert one set) and output units as is the case in the SPICE implementation.

\paragraph{Standard model.}
In the standard model, we use the Glorot initialization scheme \citep{glorot2010understanding}, i.e. each weight matrix is initialized by drawing i.i.d. samples uniformly at random in the range $[L, U]$, where $L=-\frac{\sqrt{6}}{\sqrt{n_i + n_{i+1}}}$ and $U=\frac{\sqrt{6}}{\sqrt{n_i + n_{i+1}}}$, with $n_i$ the fan-in and $n_{i+1}$ the fan-out of the weight matrix. In the second phase (nudged phase), we use a scaling factor $\beta$ with $|\beta| = 0.01$. We choose the sign of $\beta$ at random for each training iteration. We use $\alpha_1 = 0.1$ and $\alpha_2 = 0.05$ as learning rates for the first and second weight matrices. We use $500$ and $100$ iterations to approximate the steady states during the free phase and nudged phase, respectively, with a step size of $\epsilon=0.02$. We use minibatches of size $100$.

\paragraph{Positive model.}
In the positive model, each weight matrix is initialized by drawing i.i.d. samples uniformly at random in the range $[L, U]$, where $L=10^{-7}$ and $U = \frac{0.08}{\sqrt{n_i + n_{i+1}}}$. In the second phase (nudged phase), we use a scaling factor $\beta$ with $|\beta| = 0.01$. We choose the sign of $\beta$ at random for each training iteration. We use $\alpha_1 = 0.2$ and $\alpha_2 = 0.1$. During the weight update phase, all weights that fall below the threshold level $L=10^{-7}$ are clipped to $10^{-7}$. We use $500$ and $100$ iterations to approximate the steady states during the free phase and nudged phase, respectively, with a step size of $\epsilon=0.02$. We use minibatches of size $100$.

\begin{figure*}[ht!]
\begin{center}
\includegraphics[width=\textwidth]{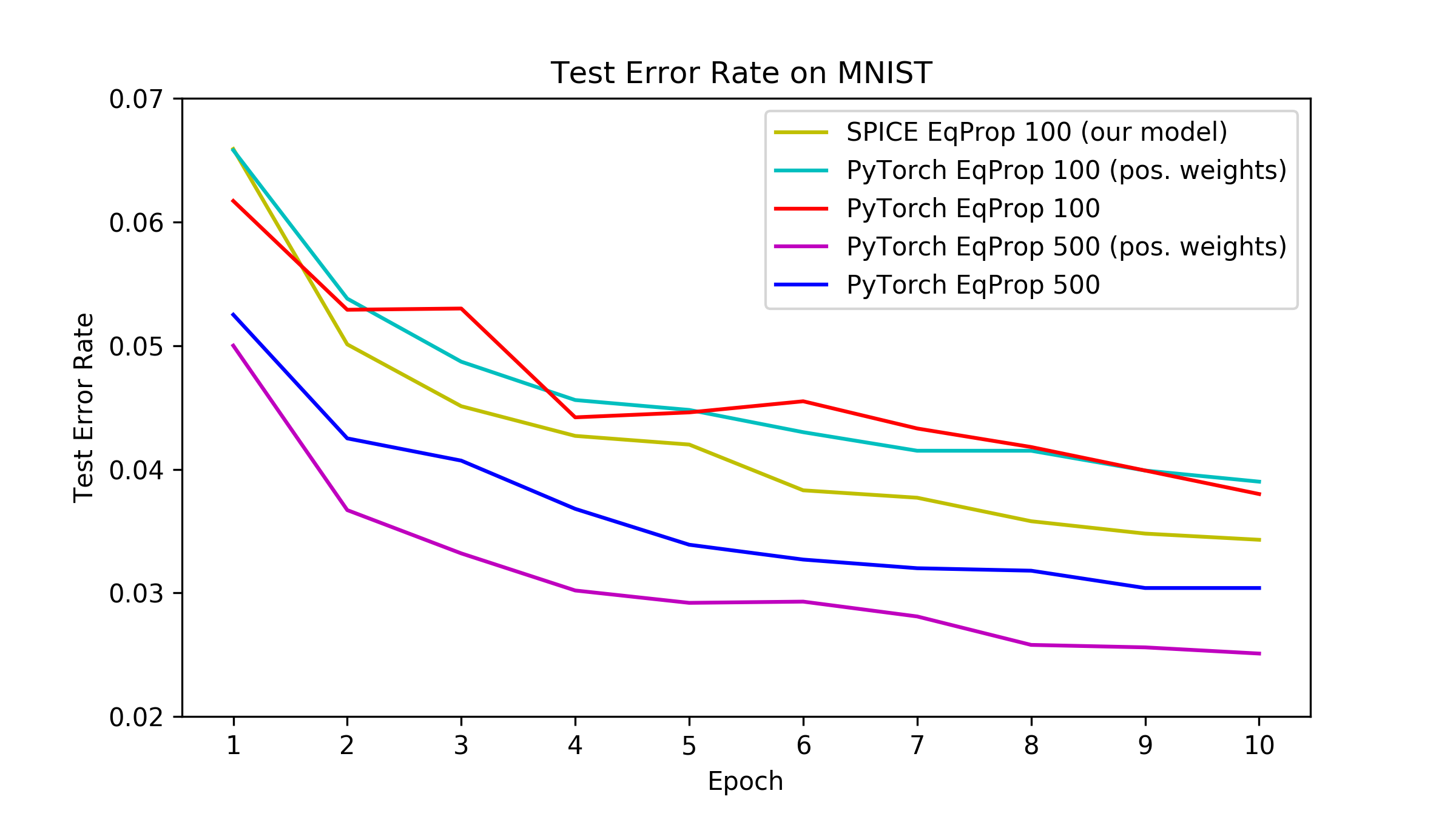}
\end{center}
\caption{Training results on MNIST. SPICE EqProp (our model) is benchmarked against PyTorch EqProp, a PyTorch implementation of the original EqProp model \citep{Scellier+Bengio-frontiers2017}. 100 and 500 are the numbers of hidden neurons. `pos. weights' means that the weights are constrained to be positive.}
\label{fig:training-curves}
\end{figure*}

\end{document}